\documentclass[a4paper,11pt]{article}

\input{macros}

\title{
	\toptitlebar
	{{\center\baselineskip 18pt
			{\Large\bf On the Role of Batch Size \\ in Stochastic Conditional Gradient Methods}}
	} 
	\bottomtitlebar}
\date{}
\author{
	Rustem Islamov\textsuperscript{1,4,$\dagger$}, 
	Roman Machacek\textsuperscript{2},
	Aurelien Lucchi\textsuperscript{1},
	Antonio Silveti-Falls\textsuperscript{3} 
	\newline
	Eduard Gorbunov\textsuperscript{4,$\star$}, 
	Volkan Cevher\textsuperscript{5,$\star$}
}
\affil{
	\textsuperscript{1}University of Basel, Switzerland, \quad
	\textsuperscript{2}University of Bern, Switzerland
}
\affil{
	\textsuperscript{3}CentraleSupélec, France,
	\quad 
	\textsuperscript{4}MBZUAI, UAE,
	\quad 
	\textsuperscript{5}EPFL, Switzerland
}

\begin{document}
	
	\maketitle
	
	\def\thefootnote{$\dagger$}\footnotetext{Most of this work was done when Rustem Islamov was a visiting student in the group of Prof. Eduard Gorbunov at MBZUAI, UAE.}
	\def\thefootnote{$\star$}
	\footnotetext{The last two authors share senior authorship.}
	\def\thefootnote{\arabic{footnote}}

	\begin{abstract}
		We study the role of batch size in stochastic conditional gradient methods under a $\mu$-Kurdyka–Łojasiewicz ($\mu$-KL) condition. Focusing on momentum-based stochastic conditional gradient algorithms (e.g., Scion), we derive a new analysis that explicitly captures the interaction between stepsize, batch size, and stochastic noise. Our study reveals a regime-dependent behavior: increasing the batch size initially improves optimization accuracy but, beyond a critical threshold, the benefits saturate and can eventually degrade performance under a fixed token budget. Notably, the theory predicts the magnitude of the optimal stepsize and aligns well with empirical practices observed in large-scale training. Leveraging these insights, we derive principled guidelines for selecting the batch size and stepsize, and propose an adaptive strategy that increases batch size and sequence length during training while preserving convergence guarantees. Experiments on NanoGPT are consistent with the theoretical predictions and illustrate the emergence of the predicted scaling regimes. Overall, our results provide a theoretical framework for understanding batch size scaling in stochastic conditional gradient methods and offer guidance for designing efficient training schedules in large-scale optimization. 
	\end{abstract}

	\section{Introduction}
	Large-scale language model training is constrained by a {token budget} $T$ rather than by a fixed number of optimization steps.
	In this regime, we face a familiar {batch size tradeoff}: increasing the batch size $B$ improves hardware utilization, yet beyond a certain scale it can degrade optimization efficiency and hurt generalization~\citep{goyal2017accurate,keskar2017largebatch,smith2018dontdecay,shallue2019measuring}.
	
	A token budget-aware viewpoint makes this tradeoff explicit.
	With batch size $B$ and sequence length $S$, the number of parameter updates is
	$
	K \eqdef \frac{T}{BS},
	$
	and hence $(B,S)$ and the stepsize jointly determine how effectively the token budget is converted into optimization progress.
	This coupling raises a central question in model training:
	\emph{how should $(B,S)$ and the stepsize be chosen, and adapted, to optimize performance under a fixed token budget $T$?}
	
	Recent empirical studies have further refined this picture. In particular, critical batch sizes -- the point at which scaling $B$ stops being beneficial -- appear to scale primarily with the effective data size and only weakly with model size under a fixed token budget~\citep{zhang2024cbs,bergsma2025powerlines}. Additionally, the critical batch threshold is often stage-dependent, motivating warmup and stage-wise training schedules~\citep{merrill2025critical}. Taken together, these findings suggest that the batch size should be treated as a dynamic optimization variable rather than a fixed hyperparameter. However, these insights remain largely empirical: they do not provide explicit optimization error laws as functions of $(B,S,T)$, nor do they characterize when increasing batch size becomes provably detrimental under a fixed token budget.
	
	In parallel, hyperparameter transfer frameworks such as $\mu$P have shown that, with appropriate parameterization and initialization, gradient magnitudes can be kept $\Theta(1)$ across model scales, enabling stable training without retuning learning rates~\citep{yang2020feature,yang2021tuninglarge,yang2022tensor}.
	However, these results are inherently local: they ensure that individual updates neither explode nor vanish, but do not address how batch size, sequence length, and stepsize should scale \emph{globally} with the token budget.
	
	Our work bridges this gap by showing that hyperparameters that are locally optimal for a given $(B,S,T)$ can become provably suboptimal as the token budget increases, even under $\mu$P-style initialization. To obtain such global scaling laws, we derive an analysis for stochastic conditional gradient (SCG) methods~\citep{pethick2025scion}, a projection-free framework that underlies several modern norm-constrained training algorithms. This class of algorithms is closely aligned with modern optimizers such as Muon \citep{jordan2024muon}.
	
	Our analysis is carried out for stochastic optimization \eqref{eq:problem1} under smoothness~\eqref{eq:smoothness} in a general norm, norm equivalence~\eqref{eq:norm_equiv}, and a $\mu$-Kurdyka--\L{}ojasiewicz ($\mu$-KL) error bound~\eqref{eq:mu_kl}~\citep{karimi2016pl,bolte2007lojasiewicz}.
	The $\mu$-KL condition is particularly well matched to SCG geometry, as it relates first-order stationarity to suboptimality measured in the dual norm induced by the linear minimization oracle (LMO).
	
	Specializing our convergence bounds to the fixed-token setting $T=KBS$ yields an explicit, non-monotone dependence of the achievable optimization error on the effective batch–sequence scale $BS$.
	Three regimes emerge:
	\emph{(i)} a noise-dominated regime where increasing $BS$ improves performance,
	\emph{(ii)} an intermediate regime where the best achievable error is essentially independent of $BS$, and
	\emph{(iii)} a large-batch regime where performance deteriorates as $BS$ grows under a fixed token budget.

	Balancing the dominant terms yields a \emph{critical} effective batch--sequence--token (BST) scale rule
	$
	BS \asymp T^{2/3}
	$
	up to problem-dependent factors that we derive in this work,
	revealing how curvature, noise, geometry, and error-bound strength shift the optimal operating point.
	Importantly, our analysis shows that {large batch sizes do not inherently degrade performance}:
	when batch size, sequence length, and learning rate are chosen according to our BST scaling rule, large-batch training remains effective and token-efficient. In contrast to $\mu$P, our perspective disentangles {local stability}, as controlled by parameterization and initialization, from {global efficiency}, as governed by token-budget–aware optimization.
	

	Our contributions are as follows: \\[-7mm]
	\begin{itemize}
		\item \textbf{Convergence guarantees for momentum SCG under $\mu$-KL.}
		We establish convergence guarantees for \Cref{alg:spectral_gd_decay_fw} under the $\mu$-KL condition~\eqref{eq:mu_kl} in a general normed geometry, explicitly tracking the effects of momentum, smoothness, and stochastic gradient noise.
		Our bounds hold \emph{in expectation} under bounded-variance and $L$-smoothness assumptions.

		\item \textbf{A token-budget view of batch, sequence length, and stepsize scaling.}
		By translating iteration complexity into token complexity via $T=KBS$, we obtain explicit $(B,S,T)$-dependent error laws and identify the \emph{critical} effective batch size $BS$ that separates beneficial from harmful scaling. \\[-6mm]
		
		\item \textbf{Actionable adaptive scheduling rules.}
		We turn the theory into concrete recipes for choosing and \emph{updating} $(\beta,B,S)$ during training under a fixed token budget, yielding the scaling relations \eqref{eq:how_to_scale_batch_size}--\eqref{eq:how_to_scale_stepsize} and a two-stage (and more generally multi-stage) protocol validated empirically on NanoGPT (\textit{cf.},\ \Cref{fig:variance}).
	\end{itemize}

	Our results complement classical large-batch heuristics such as linear learning rate scaling with warmup \citep{goyal2017accurate} and adaptive batch size schedules \citep{smith2018dontdecay}, while offering a \emph{projection-free} viewpoint rooted in conditional gradient geometry.
	They are also consistent with empirical observations that there exists a largest useful batch size depending on training stage and problem statistics \citep{mccandlish2018empirical,shallue2019measuring}, and provide an explicit optimization-side mechanism for the ``too-large batch hurts'' regime under a fixed token budget.

	\section{Related Works}
	

	\paragraph{Assumptions in SCG methods: smoothness.} Convergence analyses for stochastic conditional gradient (SCG) (aka Frank--Wolfe) methods and, more broadly, \emph{LMO-based} methods, have been conducted under various assumptions. Most analyses, including our analysis, assume standard $L$-smoothness. However, recent works consider relaxed notions, such as $(L_0,L_1)$-smoothness \citep{zhang2019gradient} and other extensions beyond global smoothness \citep{pethick2025generalized,riabinin2025gluon}. 
	Extending our analysis to these generalized smoothness settings is an interesting direction for future work, but it lies beyond the scope of the present paper.
	
	\paragraph{Assumptions in SCG methods: structured nonconvexity.}
	Most prior work considers either general nonconvex or (strongly) convex objectives, failing to capture practical learning rate and batch size scaling effects observed in large-scale training. This limitation motivates our study under structured nonconvexity.
	
	Several recent works study structured nonconvexity for LMO-based or related methods. \citet{yang2024adaptive} derives an analysis under a generalized Polyak–Łojasiewicz condition, which recovers our \Cref{asmp:mu_kl} as a special case. Their method, however, does not use momentum and assumes almost surely affine bounded noise, in contrast to the bounded variance setting considered here.

	\citet{kovalev2025orthogonalization} studies stochastic conditional gradient methods under star-convexity, a condition closely related to the $\mu$-KL condition. However, our work empirically validates the $\mu$-KL condition in large-scale language models training and uses it to derive a principled BST scaling rule under a fixed token budget. Finally, \citet{riabinin2025gluon} study an LMO-based method with adaptive layer-wise learning rates under the classical Polyak-{\L}ojasiewicz (PL) condition \citep{polyak1963gradient, lojasiewicz1963topological}, restricted to the deterministic setting without momentum, limiting its applicability to the large-scale stochastic settings.

	\paragraph{Works on Hyperparameter Transfer.} 
	Transferring hyperparameters (HPs) tuned on small proxy models to large-scale training has become increasingly important as model sizes grow.
	This line of work was initiated by the $\mu$P framework~\citep{yang2020feature,yang2021tuninglarge,yang2022tensor}, which enables zero-shot transfer of learning rates across model \emph{width}, and was later extended to other aspects of the model architecture, such as depth~\citep{yang2023feature,dey2025don}. 
	
	Technically, $\mu$P-style analyses focus on parameterizations that ensure gradient magnitudes and parameter updates remain $\Theta(1)$ around initialization. These analyses assume a fixed number of tokens processed per step and do not characterize optimization behavior when the number of optimization steps is significantly larger than the model width.

	To reason about the latter regime, we analyze SCG methods under a $\mu$-KL condition and derive convergence guarantees that explicitly depend on the batch size $B$, sequence length $S$, and total token budget $T$. This trajectory-level analysis allows us to characterize how optimization error accumulates as a function of $(B,S,T)$ and to derive principled scaling rules for jointly adapting batch size, sequence length, and stepsize, in contrast to prior hyperparameter transfer works that focus on local, per-step stability governing early training behavior.

	\paragraph{Batch Size Scheduling.} 
	
	Adapting the batch size during training is a well-established and practical strategy, motivated by both computational efficiency and optimization dynamics. Increasing the batch size can serve as an alternative to learning rate decay, reduce the number of parameter updates, and improve parallel utilization~\citep{smith2018dontdecay}. However, compared to small-batch training, large batches often lead to worse generalization performance and tend to converge to sharper minima~\citep{keskar2017largebatch}.

	A complementary empirical view suggests a \emph{critical batch size} (CBS), beyond which increasing $B$ yields diminishing token efficiency; \citet{mccandlish2018empirical} relate CBS to the gradient noise scale and argue that it evolves during training. In the LLM setting, scaling-law work \citep{kaplan2020scaling,hoffmann2022training} primarily addresses how to allocate a fixed compute budget across model size and training tokens, rather than prescribing within-run batch size schedules. More recently, \citet{bi2024deepseek} report empirical power-law relations between compute budget, batch size, and learning rate that perform well at scale.

	Taken together, these works reinforce a central practical message: the best batch size is typically not a fixed constant, but depends on the training stage, optimization hyperparameters, and budget. Motivated by this, we seek \emph{principled, token-budget--aware} rules that characterize how the {optimal} effective batch--sequence scale and stepsize should co-vary with $T$, and how $(B,S,\beta)$ should be adapted.

	\begin{algorithm}[tb]
		\caption{Stochastic Conditional Gradient (SCG)}
		\label{alg:spectral_gd_decay_fw}
		\begin{algorithmic}
			\STATE {\bfseries Input:} $x_0,m_0 \in \cX$, parameters $\alpha, \beta \in (0,1), \eta > 0$ 
			\FOR{$k=0, \ldots, K-1$}
			\STATE sample $\xi_k\sim\cD$
			\STATE compute $m_{k+1} = (1-\alpha)m_k + \alpha g(x_k;\xi_k)$
			\STATE compute $d_{k+1} = {\rm arg}\min_{d\in\cX}\<m_{k+1},d>$ s.t. $\|d\|\le 1$ 
			\STATE compute $x_{k+1} = (1-\beta)x_k + \beta\eta d_{k+1}$
			\ENDFOR
		\end{algorithmic}
	\end{algorithm}

	\section{Problem Formulation and Assumptions}\label{sec:problem}
	
	We consider the following problem template:
	
	\begin{equation}\label{eq:problem1}
		\min_{x\in\cX} f(x),
	\end{equation}
	where the space $\cX$ is equipped with a standard Euclidean norm $\|\cdot\|_2$ induced by the inner product $\<\cdot,\cdot>$, i.e., $\|x\|_2=\sqrt{\<x,x>}$, and another norm $\|\cdot\|$, which possibly does not coincide with the Euclidean one. For the norm $\|\cdot\|$, we define the associated dual norm $\|x\|_* \eqdef \sup_{\|x'\|\le 1}\<x,x'>$ for all $x\in\cX$. We seek to solve \eqref{eq:problem1} using \Cref{alg:spectral_gd_decay_fw}. 
	
	\begin{assumption}\label{asmp:smoothness}
		Let the gradient $\nabla f(\cdot)$ be Lipschitz continuous with respect to the norm $\|\cdot\|$:
		\begin{equation}\tag{A1}\label{eq:smoothness}
			\|\nabla f(x)-\nabla f(x')\|_* \le L\|x-x'\| \quad \text{for all } x,x'\in\cX,
		\end{equation}
		where $L>0$ is the gradient Lipschitz constant.
		
	\end{assumption}
	
	\begin{assumption}\label{asmp:norm_equiv} There exist a constant $\rho > 0$ such that 
		\begin{equation}\tag{A2}\label{eq:norm_equiv}
			\|x\|_* \le \rho\|x\|_2 \quad \text{for all } x\in\cX.
		\end{equation}
		Note that such a constant always exists by norm equivalence, which always holds in finite-dimensional spaces $\cX$.

	\end{assumption}

	\begin{assumption}\label{asmp:mu_kl} The objective function $f(x)$ is $\mu$-KL for some $\mu > 0$:
		\begin{align}\tag{A3}\label{eq:mu_kl}
			\|\nabla f(x)\|_* \ge \mu(f(x)-f^{\star}) \quad \text{for all } x \in \cX
		\end{align} 
		where $f^{\star} = \min_{x\in\cX} f(x)$.
	\end{assumption}

	Note that condition \eqref{eq:mu_kl} is closely related to the Polyak-{\L}ojasiewicz (PL) condition $\|\nabla f(x)\|_2^2 \ge \mu(f(x)-f^{\star})$ originally studied in \citet{polyak1963gradient, lojasiewicz1963topological}. Variants of the PL condition have been investigated for over-parameterized models \citep{liu2022loss}. A key distinction between the $\mu$-PL and $\mu$-KL conditions lies in the exponent of the gradient norm, 
	making the difference between them significant when the norm is small.
	
	\begin{figure}
		\centering
		\begin{tabular}{c}
			\includegraphics[width=0.4\linewidth]{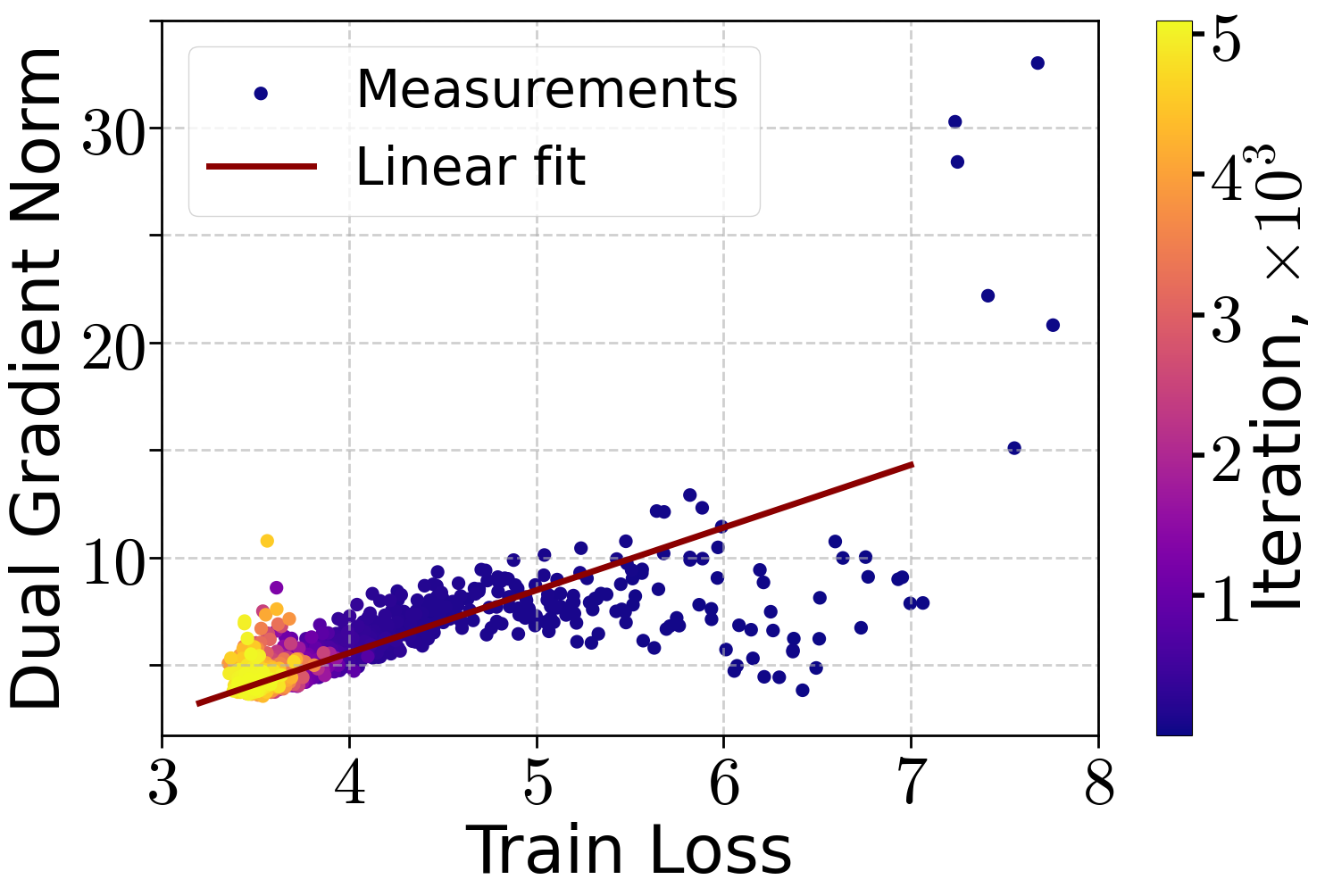} 
			
		\end{tabular}
		\caption{Empirical verification of the validity of \Cref{asmp:mu_kl} during the training of a 124M NanoGPT model. The points with a loss below 5 fit a linear function well, with a slope equal to $\mu$.}
		\label{fig:mu_kl_verification}
	\end{figure}
	
	Nevertheless, condition~\eqref{eq:mu_kl} has been extensively used in the optimization literature to analyze gradient descent under the Euclidean norm \citep{
		bolte2014proximal, fatkhullin2022sharp}. For problems with a bounded domain,
	the $\mu$-KL condition is closely related to $\zeta$-quasar convexity ($\zeta$-QC) \citep{hardt2018gradient, 
		guminov2017accelerated
	}, 
	which requires $\langle \nabla f(x), x - x^{\star} \rangle \geq \zeta(f(x) - f^{\star})$  for some $x^{\star}\in\cX$ and all $x \in \cX$. $\zeta$-QC  naturally arises in the training of neural networks \citep{zhou2019sgd, kleinberg2018alternative}. When $\cX$ is bounded with diameter $R$ with respect to the norm $\|\cdot\|$, $\zeta$-QC 
	implies the $\mu$-KL condition with $\mu = \zeta / R$.
	
	In this work, we extend the applicability of the standard $\mu$-KL assumption beyond the Euclidean norm. To demonstrate its validity in practice, we track the train loss and dual gradient norm during the training of a 124M NanoGPT model. In \Cref{fig:mu_kl_verification}, we observe that the measurements fit a linear function well, especially when the loss is below 5 
	(\textit{cf.}, the description of the full setting in \Cref{sec:experiments_mu_kl})

	We make the assumption below for the gradient noise.
	
	\begin{assumption}\label{asmp:bounded_variance} We have access to the unbiased estimator $g(\cdot;\xi)\colon \cX \to \cX$ of the gradient $\nabla f(\cdot)$, where $\xi\sim\cD$ is a random variable sampled from a probability distribution $\cD$. We assume that the stochastic gradient estimator $g(\cdot;\xi)$ is unbiased and has $\sigma$-bounded variance for some $\sigma \ge 0$:
		\begin{align}\tag{A4}\label{eq:bounded_variance}
			\EE_{\xi\sim\cD}[g(x;\xi)] = \nabla f(x) \quad \text{and} \quad \EE_{\xi\sim\cD}[\|g(x;\xi)-\nabla f(x)\|_2^2] \le \sigma^2 \quad \text{for all } x\in\cX.\notag
		\end{align}
		Additionally, let $\sigma^2 = \frac{\sigma_\star^2}{BS}$, where $B$ and $S$ are batch size and sequence length respectively.
	\end{assumption}

	\Cref{asmp:bounded_variance} is a classical assumption for the in-expectation convergence analysis of stochastic methods \citep{ghadimi2012optimal, ghadimi2013stochastic}.
	We verify the validity of \Cref{asmp:bounded_variance} during the training in \Cref{fig:variance} (\textit{cf.}, the description of the full setting in \Cref{sec:experiments_variance}).

	\section{Theoretical Analysis}\label{sec:theorical_analysis}
	
	This section establishes convergence guarantees for \Cref{alg:spectral_gd_decay_fw}, guiding how to choose the batch size $B$, sequence length $S$, and stepsize $\beta$ under a fixed token budget $T$. The proof and the full statement of the following theorem are deferred to \Cref{apx:inexp_convergence_proofs_no_restarts}.

	\begin{theorem}
		\label{thm:str_decay_mu_kl_expectation_no_restarts} Let Assumptions \eqref{eq:smoothness}, \eqref{eq:norm_equiv}, \eqref{eq:mu_kl}, and \eqref{eq:bounded_variance} hold. Let $m_{0} = g(x_{0};\xi_{0})$. 
		Let the parameters of 
		\Cref{alg:spectral_gd_decay_fw} and initialization $x_0$ be chosen as follows
		\begin{gather}
			\beta = \cO\left(\frac{1}{K}\right), 
			\quad 
			\eta = \wtilde{\cO}\left(\frac{1}{\mu}\right), \quad \alpha = \min\left\{1, {\cO}\left(\frac{(\varepsilon\mu)^2}{(\rho\sigma)^2}\right) \right\}, 
			\quad
			2\|x_0\| \le \eta, \quad \text{and}\notag\\ 
			K = \max\left[
			\wtilde{\cO}(1),
			\wtilde{\cO}\left( \max\left\{\frac{L}{\varepsilon\mu^2},
			\frac{\rho\sigma}{\varepsilon\mu},
			\frac{L(\rho\sigma)^2}{\mu(\varepsilon\mu)^3},
			\frac{(\rho\sigma)^3}{(\varepsilon\mu)^3}\right\}\right)
			\right],
		\end{gather}
		where $\cO$ hides all numerical constants and $\tilde{\cO}$ hides all numerical and logarithmic factors. Then, the output of \Cref{alg:spectral_gd_decay_fw} after $K$ iterations satisfies $\EE[f(x_K) - f^{\star}] \le \varepsilon$.
	\end{theorem}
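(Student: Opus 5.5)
I would follow the standard template for momentum-based LMO methods: a one-step descent inequality for $f$ that involves the momentum error $\epsilon_k \eqdef m_{k+1}-\nabla f(x_k)$, a recursion for $\EE\|\epsilon_k\|_2$, and a combination of the two through \eqref{eq:mu_kl}. The first step is to show the iterates stay in the ball of radius $\eta$. Since $\|d_{k+1}\|\le 1$ and $x_{k+1}$ is a convex combination of $x_k$ and $\eta d_{k+1}$, induction from $\|x_0\|\le \eta/2$ gives $\|x_k\|\le \eta$, so $\|x_{k+1}-x_k\| = \beta\|\eta d_{k+1}-x_k\|\le 2\beta\eta$.

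Next comes the descent inequality. By \eqref{eq:smoothness},
\[
f(x_{k+1}) \le f(x_k) + \beta\<\nabla f(x_k), \eta d_{k+1}-x_k> + 2L\beta^2\eta^2 .
\]
I would write $\<\nabla f(x_k),\eta d_{k+1}> = \<m_{k+1},\eta d_{k+1}> - \<\epsilon_k,\eta d_{k+1}> \le -\eta\|m_{k+1}\|_* + \eta\|\epsilon_k\|_* \le -\eta\|\nabla f(x_k)\|_* + 2\eta\|\epsilon_k\|_*$. For the remaining term, $-\<\nabla f(x_k),x_k>\le \|\nabla f(x_k)\|_*\|x_k\|\le \eta\|\nabla f(x_k)\|_*$. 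This absorbs the whole gradient gain, which is why the constraint-ball term has to be handled more carefully. I would instead use $\|x_k\|\le \eta/2$ along the trajectory in the form $\|x_k\|\le(1-\beta)^k\eta/2+\eta(1-(1-\beta)^k)$, or alternatively take $\eta$ large relative to the radius containing a minimizer. This is where the $\tilde\cO(1/\mu)$ choice of $\eta$ and the logarithmic factors enter. I expect this to be the main obstacle. My first attempt would be to compare with the point $x^\star$ via the quasar-type inequality implied on the ball. The fallback is to keep a net coefficient $c\,\beta\eta\|\nabla f(x_k)\|_*$ with $c>0$ by taking $\eta$ a constant factor larger than the region where the $\mu$-KL lower bound is active. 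Either way, \eqref{eq:mu_kl} then gives
\[
\delta_{k+1}\le(1-c\beta\eta\mu)\delta_k + 2\beta\eta\rho\|\epsilon_k\|_2 + 2L\beta^2\eta^2,
\]
where $\delta_k \eqdef f(x_k)-f^\star$ and \eqref{eq:norm_equiv} converts the dual norm to the Euclidean one.

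Then I bound the momentum error. We have $\epsilon_{k} = (1-\alpha)\epsilon_{k-1} + (1-\alpha)(\nabla f(x_{k-1})-\nabla f(x_k)) + \alpha(g(x_k;\xi_k)-\nabla f(x_k))$. I unroll this, use \eqref{eq:smoothness} with $\|x_{k}-x_{k-1}\|\le 2\beta\eta$ for the drift (converting $\|\cdot\|_*$ to $\|\cdot\|_2$ needs a constant, absorbed into $\rho$), and use independence of the noise terms to get $\EE\|\cdot\|_2\le\sqrt{\alpha}\sigma$ on the martingale part. This yields
\[
\EE\|\epsilon_k\|_2 \lesssim (1-\alpha)^k\sigma + \sqrt{\alpha}\,\sigma + \frac{L\beta\eta}{\alpha},
\]
where $m_0=g(x_0;\xi_0)$ gives the initial error $\sigma$.

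Finally, I take expectations, substitute, and unroll with $\beta\eta\mu$ a constant multiple of $\log(\cdot)/K$ (hence $\beta=\cO(1/K)$, $\eta=\tilde\cO(1/\mu)$). This gives $\EE\delta_K\lesssim e^{-c\beta\eta\mu K}\delta_0 + \frac{\rho}{\mu}(\sqrt\alpha\sigma + \frac{L\beta\eta}{\alpha}) + \frac{L\beta\eta}{\mu} + \frac{\rho\sigma}{\mu\alpha K}$, where the last term comes from summing the $(1-\alpha)^k$ transient. Each term is then set to at most $\varepsilon/4$. Taking $\sqrt\alpha\,\rho\sigma/\mu\le\varepsilon$ gives $\alpha\asymp(\varepsilon\mu)^2/(\rho\sigma)^2$. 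The term $L/(\mu^2 K)\le\varepsilon$ gives $K\gtrsim L/(\varepsilon\mu^2)$. The term $\rho L/(\mu^2\alpha K)\le\varepsilon$ gives the $L(\rho\sigma)^2/(\mu(\varepsilon\mu)^3)$ requirement. The transient term gives $(\rho\sigma)^3/(\varepsilon\mu)^3$. With $\alpha=1$, the noise term gives $\rho\sigma/(\varepsilon\mu)$. Together with the logarithmic factors from the exponential term, this recovers the stated bound on $K$.
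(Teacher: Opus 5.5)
\textbf{The central step is left open.} The step you flag as ``the main obstacle'' is the heart of the proof, and your plan never closes it. The recursion $\delta_{k+1}\le(1-c\beta\eta\mu)\delta_k+\dots$ is asserted with some $c>0$ that is never produced. Combining the two gradient terms gives exactly $-\beta\|\nabla f(x_k)\|_*(\eta-\|x_k\|)$, so you need $\eta-\|x_k\|\ge(\text{const})\cdot\eta$ uniformly for $k\le K$.

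Neither route you propose delivers this. The quasar-type comparison with $x^\star$ needs an inequality $\langle\nabla f(x),x-x^\star\rangle\ge\zeta(f(x)-f^\star)$, which \eqref{eq:mu_kl} does not give. It also needs a minimizer inside the $\eta$-ball, which none of the assumptions guarantees. The fallback fails as well: \eqref{eq:mu_kl} is global, so there is no ``region where it is active''. Enlarging $\eta$ also does not keep iterates off the boundary: if the LMO keeps returning the same unit vector $d$, then $x_k\to\eta d$ however large $\eta$ is. (Also, $\|x_k\|\le\eta/2$ is guaranteed only at $k=0$.)

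The missing idea is that the inductive bound you already wrote down suffices once you use the short horizon. It gives $\eta-\|x_k\|\ge\frac{\eta}{2}(1-\beta)^k\ge\frac{\eta}{2}(1-\beta)^K$. With $\beta=c/K$ and $K\ge 2c$, the inequality $\log(1-y)\ge -y-y^2$ on $[0,1/2]$ yields $(1-\beta)^K\ge e^{-3c/2}$. Hence $\eta-\|x_k\|\ge\frac{\eta}{2}e^{-3c/2}$ for all $k\le K$, and the contraction factor is $1-\frac{\mu\beta\eta}{2}e^{-3c/2}$. This is the real reason $\beta$ must be $\cO(1/K)$, and it is where the $\tilde{\cO}(1)$ requirement $K\ge 2c$ comes from; it does not involve $\eta$ at all. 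The choice $\eta=\frac{2e^{3c/2}}{\mu c}\log\frac{2\delta_0}{\varepsilon}$ plays a different role from the one you assign it. It makes the accumulated contraction $\frac{\mu\beta\eta}{2}e^{-3c/2}K$ equal to $\log\frac{2\delta_0}{\varepsilon}$, so the initial-error term is at most $\varepsilon/2$.

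\textbf{A smaller problem: norm bookkeeping in the momentum bound.} You track $\EE\|\epsilon_k\|_2$, so the drift $\nabla f(x_{k-1})-\nabla f(x_k)$ must be moved into $\|\cdot\|_2$. But \eqref{eq:smoothness} controls the drift only in $\|\cdot\|_*$, so this needs a reverse constant $\|\cdot\|_2\le\rho'\|\cdot\|_*$. \eqref{eq:norm_equiv} does not supply it, and it cannot be ``absorbed into $\rho$''. After converting back with $\rho$, the drift costs $\rho\rho' L\beta\eta/\alpha$ with $\rho\rho'\ge 1$. Your requirements then become $\rho\rho' L/(\varepsilon\mu^2)$ and $\rho\rho' L(\rho\sigma)^2/(\mu(\varepsilon\mu)^3)$ instead of the stated ones. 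Indeed, your own condition $\rho L/(\mu^2\alpha K)\le\varepsilon$ yields $\rho L(\rho\sigma)^2/(\mu(\varepsilon\mu)^3)$, not $L(\rho\sigma)^2/(\mu(\varepsilon\mu)^3)$.

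The fix is to unroll the momentum recursion and apply the triangle inequality directly in $\|\cdot\|_*$. Bound each drift term by $L\|x_{i+1}-x_i\|\le 2L\beta\eta$, and use \eqref{eq:norm_equiv} plus Jensen only on the initial error and the weighted noise sum. This gives $\EE\|m_{k+1}-\nabla f(x_k)\|_*\le(1-\alpha)^k\rho\sigma+\rho\sigma\sqrt{\alpha}+2L\beta\eta/\alpha$. With that, the rest of your plan (the descent inequality, the unrolling, and the term-by-term choice of $\alpha$ and $K$) matches the paper and yields the stated $K$.
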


	\begin{remark}\label{remark:on_star_convexity_res}
		Convergence bounds for SCG were derived in \citet{pethick2025scion} for the Frank-Wolfe gap, then similar results to \Cref{thm:str_decay_mu_kl_expectation_no_restarts} were given by \citet{kovalev2025orthogonalization}\footnote{\citet{kovalev2025orthogonalization} studies a stochastic first-order non-Euclidean trust-region method with momentum and weight decay, which is equivalent to \Cref{alg:spectral_gd_decay_fw}.} under star-convexity, a special case of $\zeta$-quasar convexity with $\zeta = 1$. In light of the relationship between the $\mu$-KL condition and $\zeta$-QC in Section~\ref{sec:problem}, this similarity is expected. 
		
		Our work goes beyond this connection in two important ways. First, we provide empirical justification for the use of the $\mu$-KL condition in the analysis. Second, building on this framework, we derive new theory-guided scaling rules for both the learning rate and the batch size.
	\end{remark}
	
		

	In practice, the number of iterations $K$ cannot be arbitrarily large. In fact, $K$ is trivially constrained by the available token budget $T$, the two being related by the simple identity $T=K\cdot B\cdot S$. Consequently, the requirement on $K$ in \Cref{thm:str_decay_mu_kl_expectation_no_restarts}\footnote{We ignore the requirement $K = \widetilde{\cO}(1)$, as it is always satisfied in practice; see also \Cref{cor:best_eps_appendix} for the details. In the sequel, we omit numerical constants for clarity.} can be equivalently expressed as a condition on $T$ by multiplying both sides by $BS$:
	\begin{equation*}
		T = \tilde{\cO}\left(\max\left\{
		\frac{LBS}{\varepsilon\mu^2}, \frac{\rho\sigma BS}{\varepsilon\mu}, 
		\frac{L(\rho\sigma)^2BS}{\mu(\varepsilon\mu)^3}, 
		\frac{(\rho\sigma)^3BS}{(\varepsilon\mu)^3}\right)\right)
	\end{equation*}
	Under a fixed token budget, the expression above indicates that we cannot achieve an arbitrary optimization error $\varepsilon$. Instead, \Cref{cor:best_eps} lower bounds the achievable error. 
	
	\begin{corollary}[BST Scaling Rule]\label{cor:best_eps}
		Under the setup of \Cref{thm:str_decay_mu_kl_expectation_no_restarts}, running the algorithm with parameters from \Cref{thm:str_decay_mu_kl_expectation_no_restarts} for $\frac{T}{BS}$ iterations, we achieve the optimization error 
		\begin{equation}\label{eq:best_eps}
			\varepsilon = \tilde{\cO}\left(\max\left\{
			\frac{LBS}{\mu^2T},  
			\left(\frac{L\rho^2\sigma_\star^2}{\mu^4T}\right)^{ 1/3}, 
			\frac{\rho\sigma_\star}{\mu(T^2BS)^{1/6}}\right\}\right),
		\end{equation}
		where $\tilde{\cO}$ hides numerical and logarithmic factors.
	\end{corollary}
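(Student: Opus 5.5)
The corollary follows by inverting the iteration requirement of \Cref{thm:str_decay_mu_kl_expectation_no_restarts} under the constraint $K=T/(BS)$ and substituting $\sigma^2=\sigma_\star^2/(BS)$ from \eqref{eq:bounded_variance}. The theorem guarantees $\EE[f(x_K)-f^\star]\le\varepsilon$ as soon as $K$ dominates each of the four quantities in the maximum. We drop the $\tilde\cO(1)$ requirement, as the paper does. The target $\varepsilon$ is then the smallest value for which $T/(BS)$ dominates all four terms, and this is exactly the maximum of the four values of $\varepsilon$ obtained by solving each inequality separately. This works because each term is monotonically decreasing in $\varepsilon$.

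We solve the four inequalities $T/(BS)\gtrsim(\cdot)$ one at a time, with $\sigma=\sigma_\star/\sqrt{BS}$.

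\textbf{Term 1.} From $T/(BS)\ge L/(\varepsilon\mu^2)$ we get $\varepsilon\ge LBS/(\mu^2T)$.

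\textbf{Term 3.} The condition is
\[
\frac{T}{BS}\ge \frac{L\rho^2\sigma_\star^2}{BS\,\mu^4\varepsilon^3}.
\]
The factor $BS$ cancels, so $\varepsilon\ge \bigl(L\rho^2\sigma_\star^2/(\mu^4T)\bigr)^{1/3}$.

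\textbf{Term 4.} The condition is
\[
\frac{T}{BS}\ge \frac{\rho^3\sigma_\star^3}{(BS)^{3/2}\mu^3\varepsilon^3},
\]
which gives $\varepsilon^3\ge \rho^3\sigma_\star^3/\bigl(\mu^3T(BS)^{1/2}\bigr)$, i.e. $\varepsilon\ge \rho\sigma_\star/\bigl(\mu(T^2BS)^{1/6}\bigr)$.

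\textbf{Term 2.} The condition gives $\varepsilon\ge \rho\sigma_\star\sqrt{BS}/(\mu T)$. It does not appear in \eqref{eq:best_eps}, so we must show it is dominated by the others. Write $x=\rho\sigma_\star\sqrt{BS}/(\mu T)$ and $y=\rho\sigma_\star/\bigl(\mu(T^2BS)^{1/6}\bigr)$. A direct computation gives
\[
x^{1/3}y^{2/3}=\frac{\rho\sigma_\star}{\mu T^{1/3}T^{2/9}}\cdot(\ldots).
\]
Instead of tracking that product, we compare directly: $x/y=(BS)^{2/3}/T^{2/3}=(BS/T)^{2/3}\le1$, since $K=T/(BS)\ge1$. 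So Term 2 is always at most Term 4 and can be dropped.

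Taking the maximum of the surviving three values yields \eqref{eq:best_eps}, with logarithmic factors absorbed into $\tilde\cO$.

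The main subtlety lies in the logarithmic factors hidden in $\tilde\cO$ in the theorem. These logarithms may depend on $\varepsilon$, so each inversion is really the inversion of an inequality of the form $K\ge c\,\varepsilon^{-p}\log(1/\varepsilon)$. We plan to handle this in the standard way: set $\varepsilon=c'(\cdot)\,\mathrm{polylog}(T)$ and check that the inequality holds, noting that $\log(1/\varepsilon)=\cO(\log T)$ for the resulting choice.

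We must also ensure the parameter choices remain admissible once $\sigma$ depends on $BS$. These choices are $\beta=\cO(1/K)=\cO(BS/T)$, $\eta=\tilde\cO(1/\mu)$, and $\alpha$ determined by the final $\varepsilon$. In particular, $\alpha\le1$ holds automatically through the $\min$, and we check that the case $\alpha=1$ (small noise) is consistent with the theorem's bound as well.
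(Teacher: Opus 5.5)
Your proposal is correct and follows the paper's proof. Like the paper, you invert each of the four iteration requirements under $K=T/(BS)$ and $\sigma^2=\sigma_\star^2/(BS)$, then discard the $\rho\sigma_\star\sqrt{BS}/(\mu T)$ term because its ratio to the last term is $(BS/T)^{2/3}\le 1$. Your handling of the $\varepsilon$-dependent logarithm is more careful than the paper's, which simply absorbs it; the unfinished $x^{1/3}y^{2/3}$ line is unused and can be deleted.
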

	
	\begin{figure}
		\centering
		\begin{tabular}{cc}
			\includegraphics[width=0.4\linewidth]{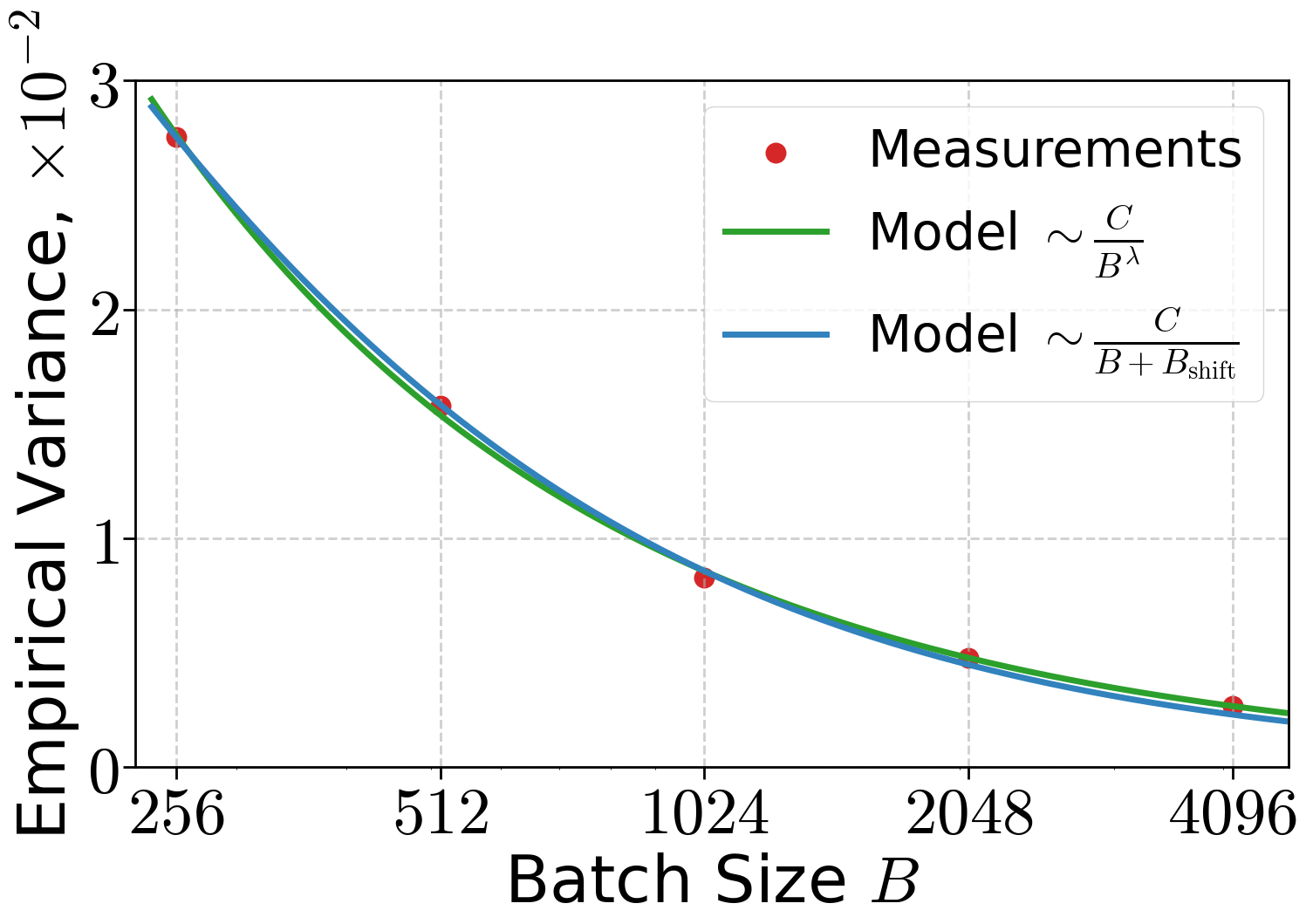} & 
			\includegraphics[width=0.4\linewidth]{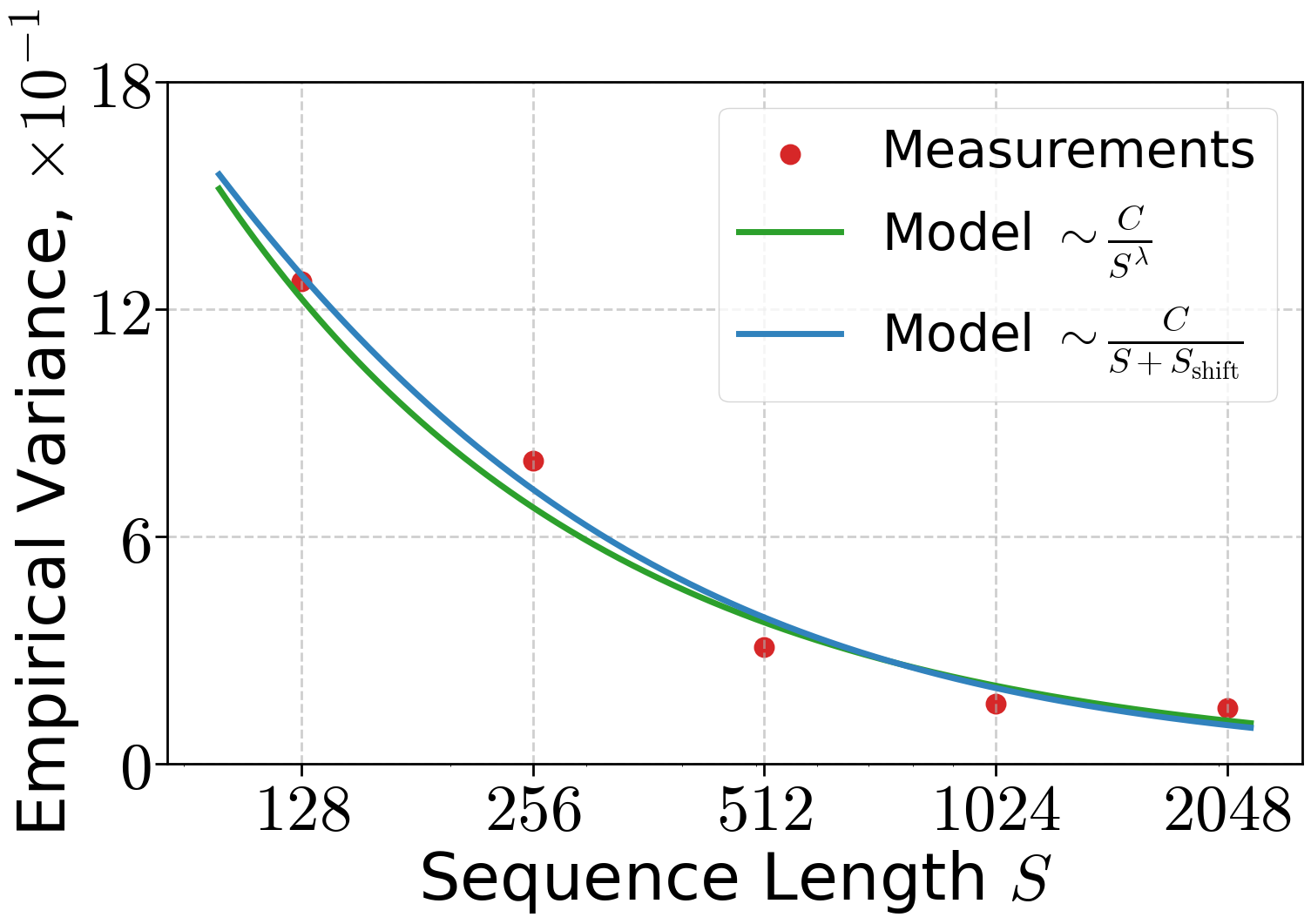}
		\end{tabular}
		\caption{Empirical gradient variance and fitted power-law models as functions of batch size $B$ with fixed sequence length $S=1024$ ({\bf left}) and sequence length $S$ with fixed batch size $B=512$ ({\bf right}) when training a 124M NanoGPT model on the FineWeb dataset under a fixed token budget $T=2.7$B. For the left plot, the estimated scaling exponent is $\lambda \approx 0.9$ and $B_{\rm shift}\approx90$, while for the right plot they are $\lambda \approx 1.1$ and $S_{\rm shift} \approx 35$. The fitted models support the validity of \Cref{asmp:bounded_variance}.}
		\label{fig:variance}
	\end{figure}

	\Cref{cor:best_eps} provides key insights into how the error $\varepsilon$ typically varies as the product $BS$ changes:
	\begin{enumerate}
		\item For small batch sizes, the third term in \eqref{eq:best_eps} dominates, and $\varepsilon$ improves as $BS$ increases.\\[-6mm]
		
		\item When the batch size exceeds $\left(\frac{\mu \rho \sigma_\star}{L}\right)^2$, the second term in \eqref{eq:best_eps} becomes dominant. In this regime, the error is independent of the batch size and sequence length, and the error instead scales as $\sim T^{-1/3}$.\\[-6mm]
		
		\item Further increasing the batch size moves the system into an iteration-starved regime where the first term dominates, causing the error to deteriorate linearly. 
		
	\end{enumerate}
	
	\Cref{cor:best_eps} indicates that the optimal achievable performance lies in the second regime, where the optimization error $\varepsilon$ is independent of both the batch size and the sequence length. From a practical perspective, however, larger batch sizes are often preferred to improve GPU utilization \citep{narayanan2021efficient}. 
	
	This motivates us to select the batch size and sequence length at the crossover between the second and third regimes. Following this intuition, we choose $B$ and $S$ as
	
	\begin{equation}\label{eq:critical_batch_size}
		\hspace{-3mm}\frac{L}{\mu^2}\frac{BS}{T}
		=
		\left(\frac{L\rho^2\sigma_\star^2}{\mu^4 T}\right)^{1/3}
		\Leftrightarrow
		BS
		=
		\left(\frac{T \mu \rho \sigma_\star}{L}\right)^{2/3},
	\end{equation}
	balancing final performance and hardware efficiency. 
	Next, the BST rule results in the Frank–Wolfe stepsize
	\begin{equation}\label{eq:critical_stepsize}
		\beta_\star \sim \frac{1}{K}.
	\end{equation}
	Notably, a Frank–Wolfe stepsize of this form is  used in practice when employing decoupled weight decay \citep{loshchilov2019decoupled} to train LLMs near the Chinchilla-optimal token-per-parameter (TPP) regime \citep{xiao2024rethinking,qiu2025hyperparameter}, where the model depth scales proportionally with the token budget.
	
	Using $\varepsilon=\left(\frac{L\rho^2\sigma_\star^2}{\mu^4T}\right)^{1/3}$, $BS = \left(\frac{T\mu\rho\sigma_\star}{L}\right)^{2/3}$, and \Cref{asmp:bounded_variance} in \Cref{thm:str_decay_mu_kl_expectation_no_restarts}, we obtain that the momentum parameter $\alpha$ 
	\begin{equation*}
		\alpha \sim \frac{\mu^2BS}{\rho^2\sigma_\star^2}\cdot \left(\frac{L\rho^2\sigma_\star^2}{\mu^4T}\right)^{2/3} = \left(\frac{L}{\mu\rho\sigma_\star T}\right)^{2/3} \left(\frac{T \mu \rho \sigma_\star}{L}\right)^{2/3}=\text{Const}.
	\end{equation*} 
	This suggests that if we find an optimal momentum parameter $\alpha$ for a small model, under the BST scaling rule, it transfers to the larger setting.
	
	To summarize, the BST scaling rule suggests the following choice of parameters in \Cref{alg:spectral_gd_decay_fw}:
	
	\begin{findingbox}
		\begin{equation}
			BS \sim T^{2/3}, \quad \beta \sim \frac{1}{K}, \quad \alpha = \text{Const}.
		\end{equation}
	\end{findingbox}
	In \Cref{sec:strategies}, we provide a more detailed explanation of the BST scaling rule for the parameter choice when training under a fixed TPP or increasing the token budget for the same model.

	

	\section{Strategies for Hyperparameter Choice}\label{sec:strategies}
	
	\paragraph{Training Setup.}
	Training a model such that \eqref{eq:critical_batch_size} holds establishes working strategies on how to train a larger model of size $D_{\rm 1}$ efficiently, given that we have a tuned configuration (i.e., the tuned values of Frank--Wolfe stepsize $\beta_{\rm 0}$, momentum parameter $\alpha$, batch size $B_{\rm 0},$ and sequence length $S_{\rm 0}$) for a smaller model of size $D_{\rm 0}$. We consider the training under a fixed TPP, which implies that the available token budget increases proportionally to the model size, i.e., $\nicefrac{T_{\rm 1}}{T_{\rm 0}} = \nicefrac{D_{\rm 1}}{D_{\rm 0}}.$ Moreover, we assume that the problem constants $L=L(D), \mu=\mu(D),$ and $\rho=\rho(D)$ change with model size. We denote the constants with subscripts $1$ and $0$ for models of size $D_{\rm 1}$ and $D_{\rm 0}$, respectively.

	\begin{remark}
		In this work, we assume that the variance constant $\sigma_\star^2$ in \Cref{asmp:bounded_variance} does not depend on the model size, as estimating its scaling with model size is computationally infeasible. We acknowledge, however, that in practice $\sigma_\star^2$ may change as the model size grows. 
	\end{remark}
	
	\subsection{Increasing Batch Size}
	

	We assume that the optimal batch size $B_0^\star$, sequence length $S_0^\star$, and $\beta_0^\star$ are tuned for a small model\footnote{Ideally, we want all hyperparameters of the optimizer and model to be tuned for a small model, including radii $\eta$ or the initialization. However, such a task is infeasible even for a small model. Therefore, we focus on the main hyperparameters that affect the final performance the most: batch size, sequence length, and Frank--Wolfe stepsize, while we set the rest according to default values obtained from prior work.} of size $D_{\rm 0}$ and satisfy \eqref{eq:critical_batch_size}, namely 
	$$B_{\rm 0}^\star S_{\rm 0}^\star \sim \left(\frac{T_{\rm 0}\mu_{\rm 0}\rho_{\rm 0}\sigma_\star}{L_{\rm 0}}\right)^{2/3}.$$
	We now determine $B_1$ and $S_1$ such that \eqref{eq:critical_batch_size} remains satisfied for a larger model. By simple manipulation, we have
	\begin{gather}\label{eq:how_to_scale_batch_size}
		B_{\rm 1}S_{\rm 1} 
		= B_{\rm 0}^{\star}S_{\rm 0}^{\star} \left(\frac{\frac{T_{\rm 1}}{T_{\rm 0}} \frac{\mu_{\rm 1}}{\mu_{\rm 0}}   \frac{\rho_{\rm 1}}{\rho_{\rm 0}}}{\frac{L_{\rm 1}}{L_{\rm 0}}}\right)^{2/3}. 
	\end{gather}
	Note that the ratio $\nicefrac{T_1}{T_0}$ can be replaced by $\nicefrac{D_1}{D_0}$ under fixed TPP. Knowing how $L,\mu,\rho$ change with model size and batch size,\footnote{In real-world applications, the change of constants with a model size might be ignored for simplicity, but later we provide estimates for them that we use in \Cref{sec:experiments}.} we can adjust the batch size and sequence length for a larger model.

	\subsection{Tuning the Frank--Wolfe Stepsize} 
	
	From \eqref{eq:critical_stepsize} we know that the optimal Frank--Wolfe stepsize $\beta$ should scale as $\frac{1}{K}$; therefore, we have 
	\begin{equation}\label{eq:how_to_scale_stepsize}
		\hspace{-3mm}\frac{\beta_{\rm 0}^{\star}}{\beta_{\rm 1}} = \frac{\nicefrac{B_{\rm 0}^{\star}S_{\rm 0}^{\star}}{T_0}}{\nicefrac{B_{\rm 1}S_{\rm 1}}{T_1}}    \Rightarrow 
		\beta_{\rm 1} 
		= \beta_{\rm 0}^{\star} \frac{B_{\rm 1}S_{\rm 1}}{B_{\rm 0}^{\star}S_{\rm 0}^{\star}}\frac{T_0}{T_1}.
	\end{equation}
	Since we increase batch size and sequence length according to \eqref{eq:how_to_scale_batch_size}, then the optimal Frank--Wolfe stepsize for a larger model is expected to be around
	\begin{equation}
		\beta_{\rm 1} =
		\beta_{\rm 0}^{\star}\left(\frac{\frac{\sqrt{T_{\rm 0}}}{\sqrt{T_{\rm 1}}}  \frac{\mu_{\rm 1}}{\mu_{\rm 0}}   \frac{\rho_{\rm 1}}{\rho_{\rm 0}}}{\frac{L_{\rm 1}}{L_{\rm 0}}}\right)^{2/3}.
	\end{equation}
	
	\subsection{Batch Size Scheduling}\label{sec:batch_size_scheduling_theory}
	
	We now consider a training setting in which data arrives sequentially rather than being fully available upfront.
	In this setting, a model is first trained on an initial corpus and subsequently updated as additional data becomes available, causing the effective token budget to grow over time.
	
	This departs from standard pretraining assumptions and raises a practical question: how should hyperparameters such as batch size and sequence length be adapted as the available token budget increases? Na\"{i}vely reusing batch and sequence settings tuned for early stages can lead to suboptimal token efficiency and slower convergence.

	In the following, we propose a principled and practically implementable pipeline for selecting and adapting batch size and sequence length in the delayed-data regime.

	\paragraph{First stage (training with $T_{(1)}=T_0$ tokens).}
	Assume that in the beginning, we only have a smaller token budget $T_{(1)}=T_0$, which is sufficient to train a smaller model efficiently, but insufficient to do so for a larger model. 
	
	The remaining tokens $T_{(2)}=T_1 - T_0$ arrive at a later time. Based on \eqref{eq:how_to_scale_batch_size}, when training the large model using $T_{(1)}$ tokens,\footnote{We should use $T_{(1)}$ instead of $T_1$ in \eqref{eq:how_to_scale_batch_size} when TPP is not fixed.} our theory suggests choosing the batch size $B_1$ and sequence length $S_1$ such that
	\begin{equation}\label{eq:batch_size_first_stage}
		B_1S_1 = B_0^{\star}S_0^{\star}\left(\frac{\nicefrac{\mu_1}{\mu_0}\cdot \nicefrac{\rho_1}{\rho_0}}{\nicefrac{L_1}{L_0}}\right)^{2/3} \aaprox{holds if problem constants do not change significantly} B_0^{\star}S_0^{\star},
	\end{equation}
	where \annotate, that is, the effective batch--sequence scale for the large model should closely match that of the small model when the problem-dependent constants do not vary substantially with the model size (see \Cref{sec:constant_estimation}). 
	
	From \eqref{eq:how_to_scale_stepsize}, the Frank--Wolfe stepsize should be chosen as 
	\begin{equation}\label{eq:stepsize_first_stage}
		\beta_1 \aeq{holds since the first stage involves $T_0$ tokens to train a larger model} \beta_0^{\star}\frac{B_1S_1}{B_0^{\star}S_0^{\star}} = \beta_0^{\star}\left(\frac{\nicefrac{\mu_1}{\mu_0}\cdot \nicefrac{\rho_1}{\rho_0}}{\nicefrac{L_1}{L_0}}\right)^{2/3} \aaprox{holds when the problem constants do not change significantly} \beta_0^{\star},
	\end{equation}
	where \annotate.
	Such a choice of the Frank--Wolfe stepsize is also recommended by the $\mu$P literature, which advocates keeping the learning rate fixed when the token budget and batch configuration are unchanged.

	\paragraph{Second stage (training with the full budget $T_{(1)}+T_{(2)}$).}
	Next, we receive an additional $T_{(2)}$ tokens.
	Eq.~\eqref{eq:best_eps} suggests that we should expect the optimization error to improve from order $T_0^{-1/3}$ at the end of the first stage to order $T_1^{-1/3}$ at the end of the second stage.
	To realize this improvement in practice, we switch to using \eqref{eq:how_to_scale_batch_size} and \eqref{eq:how_to_scale_stepsize} during the second stage, with the full token budget $T_1 = T_{(1)}+T_{(2)}$.
	
	Overall, this hyperparameter restart strategy for Scion suggests selecting the batch size, sequence length, and Frank--Wolfe stepsize based on the total number of tokens that will ultimately be available to the model.
	If additional tokens arrive at later times, the same procedure can be repeated: the batch size and sequence length are increased accordingly, and the Frank--Wolfe stepsize is adjusted based on the final token budget that the larger model will observe.

	\subsection{Guidelines for Practitioners }
	
	We summarize all the details on how to adjust the optimizer's parameters under the BST scaling rule below to facilitate its implementation in practice. 
	
	\subsubsection{Hyperparameter Scaling: From Small to Large Models}
	
	In this scenario, the model size changes. Therefore, we need to account for a change of optimization problem constants, such as $L,\mu,\rho$. We summarize the resulting procedure below:

	\begin{enumerate}
		\item Obtain optimal values of the batch size $B_0^{\star}$ and sequence length $S_0^{\star}$, Frank--Wolfe stepsize $\beta_0^{\star}$ by tuning a small model, while setting momentum parameter $\alpha$ and radii $\eta$ to default values.
		
		\item Estimate the problem constants $L_0,\mu_0,\rho_0$ and $L_1, \mu_1, \rho_1$ for small and large models, respectively, based on the fitted power laws \eqref{eq:fitted_laws}. 
		
		\item Choose batch size $B_1$, sequence length $S_1$, and Frank--Wolfe stepsize $\beta_1$ for larger model using \eqref{eq:how_to_scale_batch_size} and \eqref{eq:how_to_scale_stepsize}, namely
		\begin{equation}
			B_1S_1 = B_0^{\star}S_0^{\star}\left(\frac{\frac{T_1}{T_0} \frac{\mu_1}{\mu_0} \frac{\rho_1}{\rho_0}}{\frac{L_1}{L_0}}\right)^{2/3}, 
			\quad 
			\beta_1 = \beta_{\rm 0}^{\star}\left(\frac{\frac{\sqrt{T_{\rm 0}}}{\sqrt{T_{\rm 1}}}  \frac{\mu_{\rm 1}}{\mu_{\rm 0}}   \frac{\rho_{\rm 1}}{\rho_{\rm 0}}}{\frac{L_{\rm 1}}{L_{\rm 0}}}\right)^{2/3},
		\end{equation}
		while keeping radii $\eta$ and momentum $\alpha$ unchanged. 
		
		\item Use new parameters to train a larger model (either from the beginning or after processing the token budget used for tuning a smaller model).
	\end{enumerate}
	
	\subsubsection{Hyperparameter Scaling: From Small to Large Token Budget}
	
	Now assume that the model size remains the same, but the token budget increases. Therefore, the constants $L$ and $\mu$ remain the same, while we need to account for a change of $\rho$ with batch size.
	
	\begin{enumerate}
		\item Obtain optimal values of the batch size $B_0$ and sequence length $S_0$, Frank--Wolfe stepsize $\beta_0$ by tuning a model for a smaller token budget, while setting momentum parameter $\alpha$ and radii $\eta$ to default values.
		
		\item Estimate the problem constants $\rho_0$ and $\rho_1$ for small and large token budgets, respectively, based on the fitted power laws \eqref{eq:fitted_laws}. 
		
		\item Choose batch size $B_1$, sequence length $S_1$, and Frank--Wolfe stepsize $\beta_1$ for a larger token budget $T_1$ using \eqref{eq:how_to_scale_batch_size} and \eqref{eq:how_to_scale_stepsize}, namely
		\begin{equation}
			B_1S_1 = B_0^{\star}S_0^{\star}\left(\frac{T_1}{T_0} \frac{\rho_1}{\rho_0}\right)^{2/3}, 
			\quad 
			\beta_1 = \beta_{\rm 0}^{\star}\left(\frac{\sqrt{T_{\rm 0}}}{\sqrt{T_{\rm 1}}}   \frac{\rho_{\rm 1}}{\rho_{\rm 0}}\right)^{2/3},
		\end{equation}
		while keeping radii $\eta$ and momentum $\alpha$ unchanged. 
		
		\item Use new parameters to train a model for a longer horizon $T_1$ (either from the beginning or after processing the token budget used for a smaller model).
	\end{enumerate}

	\section{Experiments}\label{sec:experiments}

	In this section, we empirically evaluate our theoretical results by training a modded NanoGPT model on the FineWeb dataset, following the experimental setup of \citet{pethick2025scion} and based on the codebase of \citet{jordan2024moddednanogpt}. Details are given in \Cref{sec:experimental_setup}. For Scion, we adopt the recommended operator norms (Sign $\to$ Spectral $\to$ Sign): we choose the radius $\eta=3000$ for sign-updated layers and $\eta=50$ for matrix-type layers. Concretely, this corresponds to using the polar factor of the gradient for matrix-valued parameters and the elementwise sign of the gradient for all other parameter types (\textit{cf.}, \citep{pethick2025scion}).

	\subsection{Verification of Assumption \ref{asmp:bounded_variance}}\label{sec:experiments_variance}
	
	First, we empirically test the validity of \Cref{asmp:bounded_variance} when training a 124M base model with Scion for a fixed number of iterations $K=5100$. 
	To approximate the gradient variance as a function of the batch size $B$, we sample $m$ mini-batch gradients of size $B$ such that $mB=32768$, and compute the empirical variance across the sampled $m$ mini-batch gradients. 
	We track the evolution of this empirical variance over training in \Cref{fig:variance_evolution} and observe that it stabilizes rapidly after a short initial transient phase. 
	In \Cref{fig:variance}, we report the final empirical variance values measured at the end of training. 
	The fitted power-law relationships support $\sigma^2 \sim \frac{1}{BS}$ as a reasonable working approximation in the regime $B S \ll T$.
	
	\subsection{Verification of Assumption \ref{asmp:mu_kl}}\label{sec:experiments_mu_kl}
	
	Second, we conduct experiments to assess the validity of \Cref{asmp:mu_kl} in practice. 
	We use the same experimental setup as in the previous section and track both the dual norm of mini-batch gradients and the corresponding mini-batch training loss throughout training. 
	When using Scion, the primal and dual norms are defined as
	\begin{equation}\label{eq:norm_definition}
		\|x\| = \max_{\ell\in[N]}\|x_{\ell}\|_{\ell}, \quad 
		\|x\|_* = \sum_{\ell=1}^N\|x_{\ell}\|_{*,\ell},
	\end{equation}
	where $\|x_{\ell}\|_{\ell}$ and $\|x_{\ell}\|_{*,\ell}$ denote the primal and dual norms of the $\ell$-th layer of the network with $N$ layers, respectively. 
	Their precise definitions are provided in Table~2 (second and third columns) of \citet{pethick2025scion}. See also the recent work by \citet{crawshaw2025exploration}.
	
	We report the joint evolution of the dual gradient norm and the training loss over the course of training in \Cref{fig:mu_kl_verification}. 
	We observe that, once the training loss falls below approximately $5$, the data points closely follow a linear relationship, empirically supporting the use of \Cref{asmp:mu_kl} in this setting. 
	To quantify this relationship, we estimate the slope using a robust linear regression model with Huber loss, which interpolates between least squares and absolute-error ($\ell_1$) regression and thereby reduces sensitivity to outliers.
	
	\begin{table}[t]
		\caption{Final validation loss when training a 124M NanoGPT model varying the batch size while keeping the validation and train sequence lengths $1024$ under the token budget $1.3$B (TPP 10.8). We report the average across $5$ runs along with a standard deviation. {\bf Bold} numbers indicate the best performance in the column. The runs in \textcolor{darkred}{\bf red} indicate the best configuration of batch size, sequence length, and Frank--Wolfe stepsize across all runs for a given token budget.}

		\centering
		\label{tab:batch_size_abl_124M}
		\resizebox{\linewidth}{!}{
			\begin{tabular}{c|ccccccc}
				\toprule

				$T=1.3$B & \multicolumn{7}{c}{\bf Batch Size} \\ 
				\toprule
				
				$\boldsymbol{\beta}, \times 10^{-4}$ &
				{\bf 64} &
				{\bf 128} &
				{\bf 256} &
				{\bf 512} &
				{\bf 1024} &
				{\bf 2048} &
				{\bf 4096} \\ 
				\midrule 
				{\bf 1.2} &
				{\bf 3.4258}$_{\pm 0.0004}$ &	
				3.3889$_{\pm 0.0012}$ &
				3.3857$_{\pm 0.0013}$ &	
				3.4074$_{\pm 0.0010}$ &	
				3.4587$_{\pm 0.0010}$&	
				3.5598$_{\pm 0.0012}$ &	
				3.7715$_{\pm 0.0013}$ \\
				{\bf 2.4} &
				3.4394$_{\pm 0.0007}$ &
				{\bf 3.3880}$_{\pm 0.0043}$ &
				\textcolor{darkred}{\bf 3.3706}$_{\pm 0.0019}$ &
				3.3801$_{\pm 0.0016}$ &
				3.4144$_{\pm 0.0004}$ &
				3.4940$_{\pm 0.0009}$ &
				3.6694$_{\pm 0.0021}$\\
				{\bf 3.6} &
				3.4554$_{\pm 0.0008}$&
				3.3945$_{\pm 0.0015}$ &
				\textcolor{darkred}{\bf 3.3717}$_{\pm 0.0020}$ &
				{\bf 3.3765}$_{\pm 0.0017}$ &
				{\bf 3.4065}$_{\pm 0.0007}$ &
				{\bf 3.4799}$_{\pm 0.0017}$ &
				{\bf 3.6472}$_{\pm 0.0023}$ \\
				{\bf 4.8} &
				3.4766$_{\pm 0.0016}$ &
				3.4072$_{\pm 0.0048}$ &
				3.3790$_{\pm 0.0013}$ & 
				3.3807$_{\pm 0.0006}$ &
				3.4115$_{\pm 0.0025}$ &
				3.4945$_{\pm 0.0038}$ &
				3.6611$_{\pm 0.0020}$ \\
				{\bf 6.0} &
				3.4967$_{\pm 0.049}$ &
				3.4198$_{\pm 0.0002}$ &
				3.3875$_{\pm 0.0019}$&
				3.3887$_{\pm 0.0024}$ &
				3.4202$_{\pm 0.0022}$ &
				3.5005$_{\pm 0.0038}$ &
				3.7013$_{\pm 0.0160}$ \\
				{\bf 7.2} &
				3.5151$_{\pm 0.0007}$ &
				3.4301$_{\pm 0.0001}$ &
				3.3978$_{\pm 0.0026}$ &
				3.3960$_{\pm 0.0022}$ & 
				3.4331$_{\pm 0.0025}$ &
				3.5270$_{\pm 0.0071}$ &
				3.7506$_{\pm 0.0271}$
				\\ \bottomrule
				
			\end{tabular}
		} 
	\end{table}
	
	\begin{table}[t]
		\caption{Final validation loss when training a 124M NanoGPT model varying the train sequence length while keeping the batch size $256$ under the token budget $1.3$B. The validation sequence length is always $1024$. We report the average across $5$ runs along with a standard deviation. ${}^{*}$ indicates that not all runs had a stable decrease in validation loss. {\bf Bold} numbers indicate the best performance in the column. The runs in \textcolor{darkred}{\bf red} indicate the best configuration of batch size, sequence length, and Frank--Wolfe stepsize across all runs for a given token budget.}
		\centering
		\label{tab:seq_len_abl_124M}
		\resizebox{0.75\linewidth}{!}{
			\begin{tabular}{c|ccccc}
				\toprule

				$T=1.3$B & \multicolumn{5}{c}{\bf Sequence Length} \\ 
				\toprule
				
				$\boldsymbol{\beta}, \times 10^{-4}$ &
				{\bf 256} &
				{\bf 512} &
				{\bf 1024} &
				{\bf 2048} &
				{\bf 4096} \\ 
				\midrule 
				{\bf 1.2} &
				{\bf 3.7076}$_{\pm 0.0084}$&
				3.4647$_{\pm 0.0240}$&	
				3.4587$_{\pm 0.0010}$&
				3.4126$_{\pm 0.0014}$&
				3.4811$_{\pm 0.0025}$ \\
				{\bf 2.4} &
				3.9622$_{\pm 0.0585}$ ${}^*$&
				{\bf 3.4633}$_{\pm 0.0091}$&
				\textcolor{darkred}{\bf 3.3706}$_{\pm 0.0019}$&
				3.3834$_{\pm 0.0026}$&
				3.4299$_{\pm 0.0021}$\\
				{\bf 3.6} &
				4.0441$_{\pm 0.2055}$ ${}^*$&
				3.4796$_{\pm 0.0131}$&
				\textcolor{darkred}{\bf 3.3717}$_{\pm 0.0020}$&
				{\bf 3.3792}$_{\pm 0.0022}$&
				{\bf 3.4216}$_{\pm 0.0013}$ \\
				{\bf 4.8} &
				3.9292$_{\pm 0.0852}$ ${}^*$&	
				3.5004$_{\pm 0.0063}$&
				3.3790$_{\pm 0.0013}$&
				3.3829$_{\pm 0.0020}$&
				{\bf 3.4243}$_{\pm 0.0029}$\\
				{\bf 6.0} &
				3.9901$_{\pm 0.0170}$ ${}^*$&
				3.5134$_{\pm 0.0059}$&
				3.3875$_{\pm 0.0019}$&
				3.3910$_{\pm 0.0024}$&
				3.4374$_{\pm 0.0037}$ \\
				{\bf 7.2} &
				3.9819$_{\pm 0.1195}$ ${}^*$&
				3.5269$_{\pm 0.0187}$&
				3.3960$_{\pm 0.0022}$&
				3.3987$_{\pm 0.0029}$&
				3.4537$_{\pm 0.0030}$
				\\ \bottomrule
				
			\end{tabular}
		} 

	\end{table}
	
	\subsection{Ablations on Batch Size and Sequence Length}\label{sec:ablations_batch}
	
	We conduct ablation studies by varying the batch size $B$ and sequence length $S$ to identify the optimal Frank--Wolfe stepsize $\beta$ for Scion when training a base 124M model with a fixed validation sequence length $1024$. We report results under a fixed token budget of $1.3$B 
	in Tables~\ref{tab:batch_size_abl_124M} and \ref{tab:seq_len_abl_124M}. This corresponds to TPP ratio of $10.8$ 
	(approximately $0.5\times$ 
	the Chinchilla optimum).
	
	We observe that once the batch size (or sequence length) is sufficiently large, the optimal Frank--Wolfe stepsize stabilizes at $3.6 \cdot 10^{-4}$. 
	Moreover, the results indicate that, for the base model, the optimal batch size and sequence length are approximately $256$ and $1024$
	, yielding the lowest validation loss. Additionally, for significantly short train sequence lengths of $256$, most runs were unstable and exhibited high standard deviations since the validation loss is $1024$. We also observe that the best performance between batch sizes $256$ and $512$ differs little, indicating that performance is almost batch-independent. This aligns with \Cref{cor:best_eps}, which shows that there exists a batch-independent regime.
	

	\subsection{Estimating Problem-Dependent Constants}\label{sec:constant_estimation}
	
	In our next experiment, we estimate the problem-dependent constants $L$, $\mu$, and $\rho$ across different model configurations in order to track how these quantities change with model size. 
	Specifically, we train models using a fixed Frank--Wolfe stepsize $\beta = 3.6 \cdot 10^{-4}$, batch size $B = 512$, and sequence length $S = 1024$ for $5100$ iterations, following the ablation study in \Cref{sec:ablations_batch}, while varying the number of layers $\texttt{n\_layer}$ and the embedding dimension $\texttt{n\_embd}$. In this section, we ignore the change in the constants $L,\mu,\rho$ with the batch size, but later we account for this dependency in the hyperparameter transfer. The estimated values are reported in \Cref{sec:empirical_verification_training_setup}.
	The estimation procedure is carried out as follows.
	
	\paragraph{Smoothness constant $L$.}
	To estimate the smoothness constant, we measure the following ratio
	\[
	\frac{\|g(x_k;\xi_k) - g(x_{k-1};\xi_{k-1})\|_*}{\|x_k - x_{k-1}\|},
	\]
	where $g(x_k;\xi_k)$ and $g(x_{k-1};\xi_{k-1})$ denote the mini-batch gradients at two consecutive iterations, and the norms are defined as in the previous section. 
	This quantity has been used in prior work as a proxy for local curvature during training \citep{alimisis2025we, zhang2019root, riabinin2025gluon}. 
	As a final estimate of $L$, we average the measured ratio over the last $100$ iterations.

	\paragraph{KL condition constant $\mu$.} The estimation of $\mu$ follows the same procedure as in \Cref{sec:experiments_mu_kl}. 
	In particular, we fit a robust linear regression model with Huber loss to the relationship between the dual gradient norm and the training loss, and use the resulting slope as an estimate of $\mu$.

	\paragraph{Norm-equivalence constant $\rho$.} In the proof of \Cref{thm:str_decay_mu_kl_expectation_no_restarts}, we apply \Cref{asmp:norm_equiv} to bound terms of the form
	$$
	\|g(x_k;\xi_k) - \nabla f(x_k)\|_* \le \rho \, \|g(x_k;\xi_k) - \nabla f(x_k)\|_2.
	$$
	To approximate the full gradient $\nabla f(x_k)$, we follow the same procedure described in \Cref{sec:experiments_variance}. 
	We track the ratio between the dual norm and the Euclidean norm throughout training, and report the average of this ratio over the last $100$ iterations as an estimate of $\rho$.

	We conduct the estimation procedure for several model configurations and fit a shifted power law\footnote{The choice of the fitting model is flexible, and alternative functional forms could also be considered. We leave the exploration of other functional dependencies to future work.} for the problem constants $L,\mu,\rho$ of the form: 
	
	\begin{align}\label{eq:fitted_laws}
		&\mu(\texttt{n\_layer}, \texttt{n\_embd}) = 5.2(\texttt{n\_layer}+1.7)^{-0.2};\\
		&L(\texttt{n\_layer},\texttt{n\_embd}) = 0.4(\texttt{n\_layer}+0.7)^{0.2}(\texttt{n\_embd}+126)^{0.35};\notag\\
		&\rho(\texttt{n\_layer}, \texttt{n\_embd}, \texttt{batch\_size}) = 4.1(\texttt{n\_layer}-2.7)^{0.25}(\texttt{n\_embd}-250.8)^{0.3}(\texttt{batch\_size}-9.4)^{0.1}.\notag
	\end{align}

	\begin{table}[t]
		\centering
		\caption{Estimated problem-dependent constants, assuming that they change with the number of layers \texttt{n\_layer}, embedding dimension \texttt{n\_embd}, and batch size \texttt{batch\_size} according to \eqref{eq:fitted_laws}. The estimations of the change for $\beta$ and $BS$ are based on \eqref{eq:how_to_scale_batch_size} and \eqref{eq:how_to_scale_stepsize}.}
		\label{tab:estimated_constants}
		\resizebox{0.8\linewidth}{!}{
			\begin{tabular}{c|ccccc}
				\toprule
				
				{\bf Model} & $\boldsymbol{L}$ & $\boldsymbol{\mu}$ & $\boldsymbol{\rho}$ & \makecellnew{\bf How to change $\boldsymbol{\beta}$ \\ {\bf w.r.t. 124M model?}} & \makecellnew{\bf How to change $\boldsymbol{B}\boldsymbol{S}$ \\ {\bf w.r.t. 124M model?}}\\
				\midrule
				{\bf 124M} & $7.2$ & $3.1$ & $62.7$ & $1\times$ & $1\times$\\
				{\bf 1B} & $10.6$ & $2.9$ & $111.9$ & \makecellnew{$\searrow 0.5\times~{}^{(a)}$ \\ $\searrow 0.54\times~{}^{(b)}$} & \makecellnew{$\nearrow 4\times~{}^{(a)}$ \\ $\nearrow 4.37\times~{}^{(b)}$}
				
				\\ \bottomrule
				
			\end{tabular}
		} 
		\begin{tablenotes}
			{\scriptsize 
				\item ${}^{(a)}$ Taking into account the practical requirement that $B$ and $S$ should be powers of two. We increase the product $BS$ rounding to the closest power of two.
				\item ${}^{(b)}$ Ignoring the practical requirement that $B$ and $S$ should be powers of two.
			}
		\end{tablenotes}
	\end{table}

	Interestingly, the constant $\mu$ decreases with \texttt{n\_layer}, while it remains unchanged with \texttt{n\_embd}. In contrast, the constants $L$ and $\rho$ increase with both \texttt{n\_layer} and \texttt{n\_emdb}.  Using the fitted power laws, we estimate the constants for $2$ model configurations used in the experiments of size 124M and 1B.

	We observe that the problem-dependent constants vary slowly with model size and remain relatively stable across the model configurations we consider. Although we use these estimates in subsequent experiments, neglecting this variation does not significantly affect the resulting derivations. The impact of these changes becomes more pronounced only in regimes where $D_1\gg D_0$. Using estimated constants, we characterize how the Frank--Wolfe stepsize $\beta$ and the product $BS$ should be set for the 1B model, knowing the optimal configuration ($B=256, S=1024, \beta=3.6\cdot10^{-4}$) for the 124M model in \Cref{tab:estimated_constants} and using \eqref{eq:how_to_scale_batch_size}, \eqref{eq:how_to_scale_stepsize}. Note that we provide two configurations for the 1B model: whether the practical requirement that the batch size and sequence length be powers of $2$ should be taken into account.

	\subsection{Increasing Batch Size and Sequence Length during Training}\label{exp:batch_size_scheduling}

	Next, we evaluate the proposed strategy from \Cref{sec:batch_size_scheduling_theory}. 
	We use the 124M model as a base model, for which we previously identified batch size $B_0=256$, sequence length $S_0=1024$, and Frank--Wolfe stepsize $\beta_0=3.6\cdot10^{-4}$ as providing the best performance under a token budget $T_0 = 1.3$B. We then consider training a larger 1B model under a total token budget of $T_1 = 10.8$B (the same TPP) using the following strategies:
	
	\begin{itemize}
		\item \textbf{Restarted Scion.}
		We follow the strategy described in \Cref{sec:batch_size_scheduling_theory}. 
		During the first stage, corresponding to the initial $T_0$ tokens, we use batch size $B_0 = 256$ and sequence length $S_0 = 1024$ with stepsize $\beta_0 = 3.6\cdot10^{-4}$. 
		After processing $T_0$ tokens, we increase the product $BS$ four times and consider two restarted schemes: $B_1 = 512$ and sequence length $S_1 = 2048$ (in \textcolor{color5}{yellow}) and $B_2=1024$ and sequence length $S_2=1024$ (in \textcolor{color7}{gray}), both with stepsize $\beta_1 = \beta_0/2 = 1.8\cdot10^{-4}$ for the remaining token budget,\footnote{The batch size, sequence length, and Frank–Wolfe stepsize used in the first and second training stages are determined using \eqref{eq:how_to_scale_batch_size} and \eqref{eq:how_to_scale_stepsize} and reported in \Cref{tab:estimated_constants}.} following the derivations in \eqref{eq:how_to_scale_batch_size} and \eqref{eq:how_to_scale_stepsize}, with estimates of the problem-dependent constants taken from \Cref{sec:constant_estimation}.

		\item \textbf{Fixed \emph{tuned}-batch Scion.}
		We train the 1B model using the \emph{tuned} batch size $B_0 = 256$, which was obtained on a smaller 124M model. We set the sequence length $S_0 = 1024$ with Frank--Wolfe stepsize $\beta_0 = 3.6\cdot10^{-4}$ over the entire horizon $T_1 = 10.8$B (in \textcolor{color0}{light blue}). This configuration is motivated by hyperparameter transfer results under the $\mu$P framework, where the hyperparameters tuned for a smaller model are used when training a larger model.

		\item \textbf{Fixed \emph{large}-batch Scion.}
		We train the 1B model using a larger batch size--sequence-length product. In particular, we consider two settings. For $B_1 = 512, S_1 = 2048,$ we evaluate two baselines trained from the beginning over the full token budget $T_1 = 10.8$B: one with Frank--Wolfe stepsize $\beta_0$ (in \textcolor{color1}{orange}), suggested by the $\mu$P framework, and one with stepsize $\beta_1$ (in \textcolor{color3}{pink}), suggested by \eqref{eq:how_to_scale_stepsize}. For $B_2=1024 \quad \text{and} \quad S_2=1024,$ we again train from the beginning over the full token budget $T_1 = 10.8$B, and consider two baselines: one with stepsize $\beta_0$ (in \textcolor{color2}{blue}), suggested by the $\mu$P framework, and one with stepsize $\beta_1$ (in \textcolor{color4}{green}), suggested by \eqref{eq:how_to_scale_stepsize}.
		
	\end{itemize}
	
	\begin{figure}[t]
		\centering
		\begin{tabular}{c}
			\hspace{-3mm}
			\includegraphics[width=\linewidth]{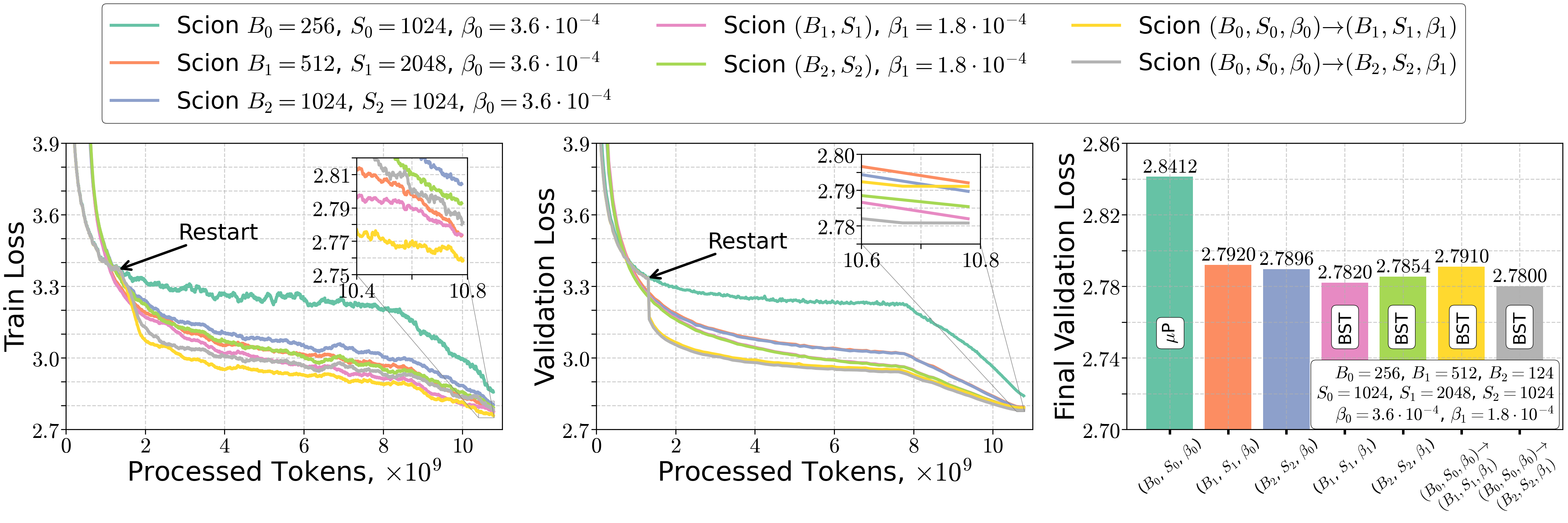} 
			
		\end{tabular}
		\caption{Comparison of batch size and sequence length scheduling strategies when training a 1B model. The restarting schemes (in \textcolor{color5}{yellow} and \textcolor{color7}{gray}) are compared against fixed schedules. The validation loss is evaluated with a smaller sequence length of $1024$. The values of batch sizes $B_{0,1,2}$, sequence lengths $S_{0,1}$, and Frank--Wolfe stepsizes $\beta_{0,1}$ are given in the legends. The notation $(B_{0,1,2}, S_{0,1,2}, \beta_{0,1})$ characterizes which batch size, sequence length, and Frank--Wolfe stepsize are used for the particular setup. The notation $(B_0,S_0,\beta_0)\to(B_{1,2}, S_{1,2},\beta_1)$ characterizes how parameters of Scion change after restart (e.g., batch size increases from $B_0$ to $B_{1,2}$), respectively. The notation $\mu$P or BST indicates the rule used to select $B, S$, and $\beta$.
		}
		\label{fig:restarted_results}
	\end{figure}
	
	\begin{remark}
		We note that the choice of batch size $B$ and sequence length $S$ in this set of experiments is partially guided by practical considerations, as these values are typically selected as powers of $2$. We follow this convention to evaluate the performance of the restarted and small- and large-batch baselines in a setting that more closely reflects real-world practice. However, in the later experiments aimed at demonstrating hyperparameter transfer, we select $B$ and $S$ strictly according to the BST scaling rule, ignoring the aforementioned practical constraints.
	\end{remark}
	
	Based on the results in \Cref{fig:restarted_results}, we can make the following claims. 
	\begin{enumerate}
		\item The $\mu$P framework, where all parameters of the algorithm, i.e., Frank--Wolfe stepsize, batch size, and sequence length remain unchanged, achieves the worst performance. This result demonstrates the limitation of the $\mu$P framework, which ignores changes of batch size and sequence length. Our BST scaling instead suggests increasing the product $BS$ that leads to enhanced performance.
		
		\begin{findingbox}
			\textbf{Finding 1.} The training configuration suggested by the $\mu$P framework becomes sub-optimal when the batch size and/or sequence length increase.
		\end{findingbox}

		\item Both restarting strategies for Scion demonstrate competitive performance compared to the other baselines. After the restart, both variants show accelerated improvement in training and validation loss relative to the baselines. Moreover, the training curves of the restarted Scion models remain consistently below those of the other methods from the restart point onward.
		
		\begin{findingbox}
			\textbf{Finding 2.} Restarting strategies improve performance in comparison to other fixed large-batch baselines. Eventually, both restarted and fixed large-batch variants of Scion match in performance at the end of the training. 
		\end{findingbox}
		
		\item In addition, we observe that quadrupling the batch size while keeping the sequence length fixed performs slightly better than doubling both the batch size and the sequence length. In this setting, the former strategy achieves approximately a $0.01$ lower validation loss than the latter.

		\begin{findingbox}
			\textbf{Finding 3.} Increasing batch size while keeping sequence length might be slightly more preferable than increasing both parameters simultaneously if context extension is not necessary. Otherwise, extending the sequence length (i.e., adding this capability) must be compensated for by the batch size for optimization efficiency.  
		\end{findingbox}

		\item All large-batch baselines (with both values of Frank--Wolfe stepsize: $\beta_0$ suggested by $\mu$P and $\beta_1$ suggested by BST rule) achieve similar performance. The best performance is achieved when we double the batch size and the sequence length with Frank-Wolfe stepsize set according to \eqref{eq:how_to_scale_stepsize}. The other three baselines are slightly worse and achieve the validation loss $0.005-0.01$ higher.  
		
		\begin{findingbox}
			\textbf{Finding 4.} Large batch size $512$ is sub-optimal for a smaller model with TPP $10.6$, but becomes preferable for a larger model with the same TPP, highlighting limitations of the $\mu$P framework. Even if we tune the smaller model with the larger batch for $\mu$P, the step-size configuration is suboptimal for the larger model. This issue is further confounded by the fact that we often do not know the optimal batch size for the larger model since it depends on the token horizon. 
		\end{findingbox}

		\subsection{Increasing Batch Size Further Does Not Help}
		
		Next, we investigate whether increasing the product $BS$ by $8$ or $16$ times (when fixing a train sequence length $1024$, this results in batch sizes $2048$ and $4096$) yields additional benefits when training a larger 1B model. All experiments are conducted with fixed training and validation sequence lengths of $1024$. We report results using both Frank–Wolfe stepsizes suggested by the $\mu$P framework and those determined by our BST scaling rule.
		
		The results are shown in \Cref{fig:all_large_batch_results}. We observe that Scion with a batch size of $1024$ outperforms the baselines with batch sizes $2048$ and $4096$. This finding suggests that our BST scaling rule, which prescribes how to scale the product $BS$, provides a reliable practical guideline. Increasing the product $BS$ beyond this recommendation does not yield further performance gains. In particular, the validation loss for Scion with batch size $2048$ worsens by approximately $0.005$-$0.01$, while for batch size $4096$ the degradation is more pronounced, around $0.03$-$0.04$.
		
		\begin{findingbox}
			\textbf{Finding 5.} Increasing the batch size beyond $1024$, suggested by the BST scaling rule, worsens performance gains.
		\end{findingbox}
		
	\end{enumerate}
	
	\begin{figure}[t]
		\centering
		\begin{tabular}{c}
			\hspace{-3mm}
			\includegraphics[width=\linewidth]{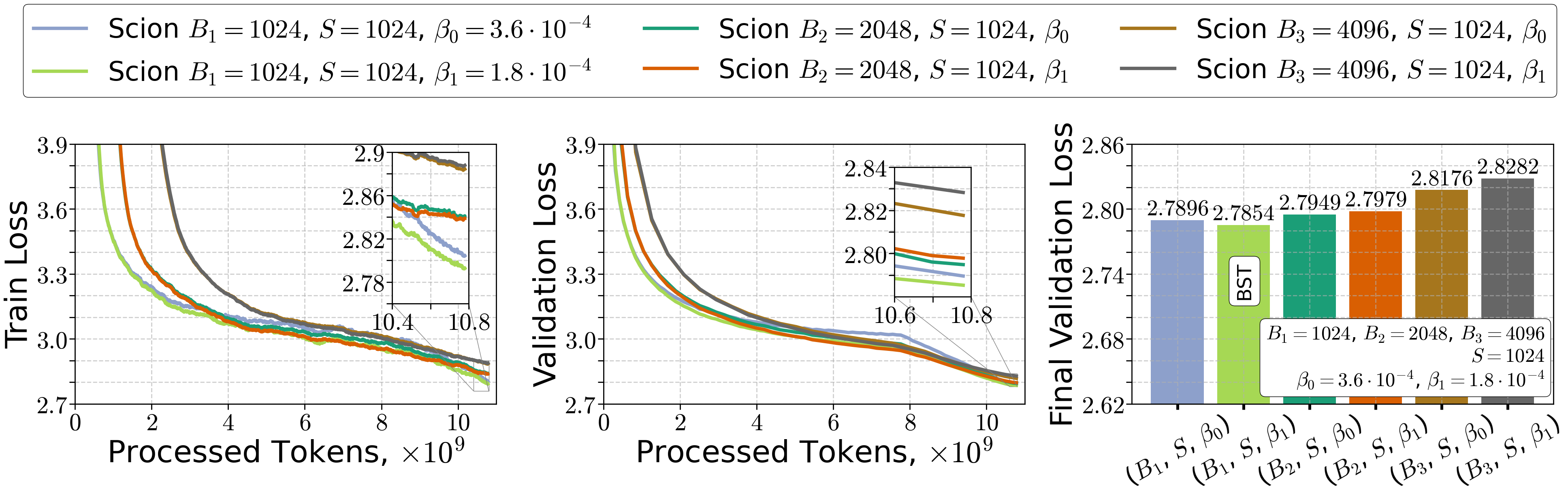} 
			
		\end{tabular}
		\caption{Comparison of fixed large batch size strategies when training a 1B model. The validation loss is evaluated with a smaller sequence length $1024$. Scion with a batch size of $1024$ suggested by our BST scaling rule achieves the best performance compared to other baselines with batch sizes $2048$ and $4096$. The values of batch sizes $B_{1,2,3}$, sequence lengths $S$, and Frank--Wolfe stepsizes $\beta_{0,1}$ are given in the legends. The notation $(B_{1,2,3}, S, \beta_{0,1})$ characterizes which batch size, sequence length, and Frank--Wolfe stepsize are used for the particular setup, respectively. The notation BST indicates the rule used to select the $B, S$, and $\beta$.
		}
		\label{fig:all_large_batch_results}
	\end{figure}


	\begin{figure}[h]
		\centering
		\begin{tabular}{c}
			\hspace{-3mm}
			\includegraphics[width=\linewidth]{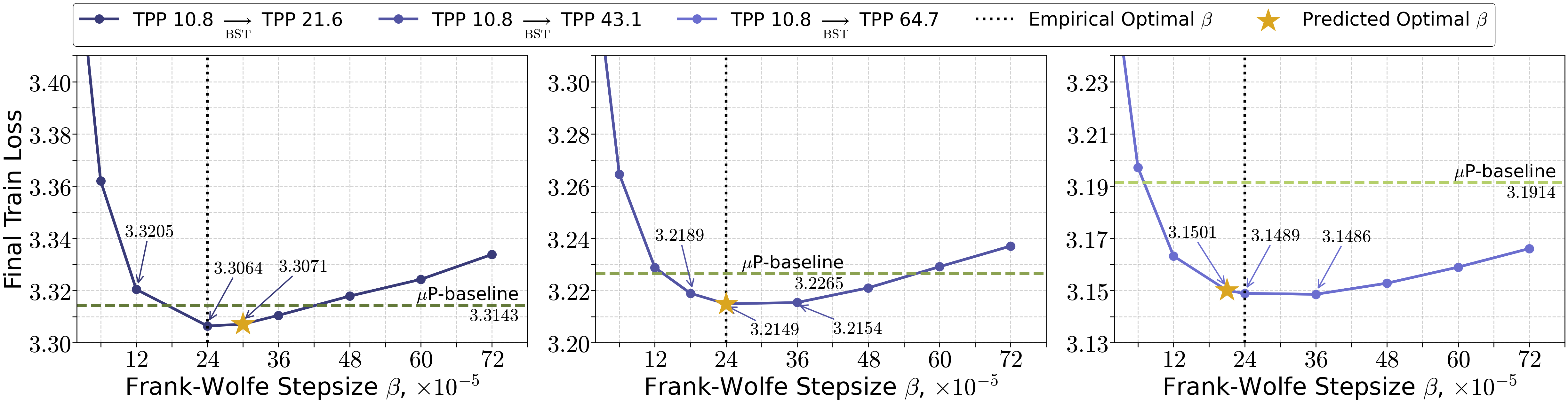} 
			
		\end{tabular}
		\caption{The final performance of the 124M model when varying the Frank--Wolfe stepsize $\beta$ under different token budgets ({\bf left:} 2.7B, {\bf center:} 5.3B, {\bf right:} 8.0B). We average the train loss over 3 random seeds and report the moving average in the window of size 500. We observe that the BST scaling rule predicts a good estimate for the optimal $\beta$ when increasing the token budget. Moreover, the difference in performance between BST and $\mu$P baselines grows with a token budget.
		}
		\label{fig:lr_transfer_124M}
	\end{figure}

	\begin{figure}[h]
		\centering
		\begin{tabular}{c}
			\hspace{-3mm}
			\includegraphics[width=\linewidth]{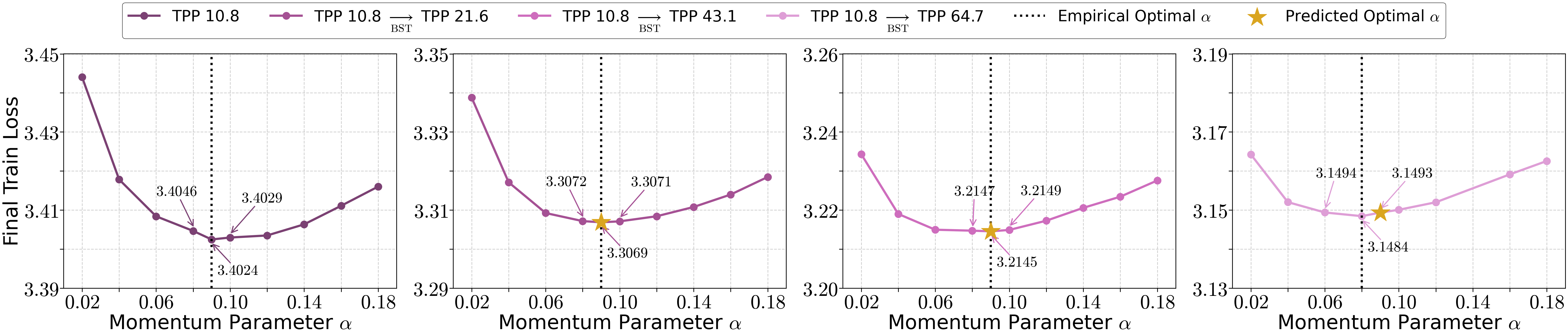} 
			
		\end{tabular}
		\caption{The final performance of the 124M model when varying the momentum $\alpha$ under different token budgets ({\bf left:} 1.3B, {\bf center left:} 2.7B, {\bf center right:} 5.3B, {\bf right:} 8.0B). We average the train loss over 3 random seeds and report the moving average in the window of size 500. We observe that rule momentum parameter $\alpha$ transfers under BST scaling.
		}
		\label{fig:alpha_transfer_124M}
	\end{figure}

	\begin{figure}[h]
		\centering
		\begin{tabular}{c}
			\hspace{-3mm}
			\includegraphics[width=\linewidth]{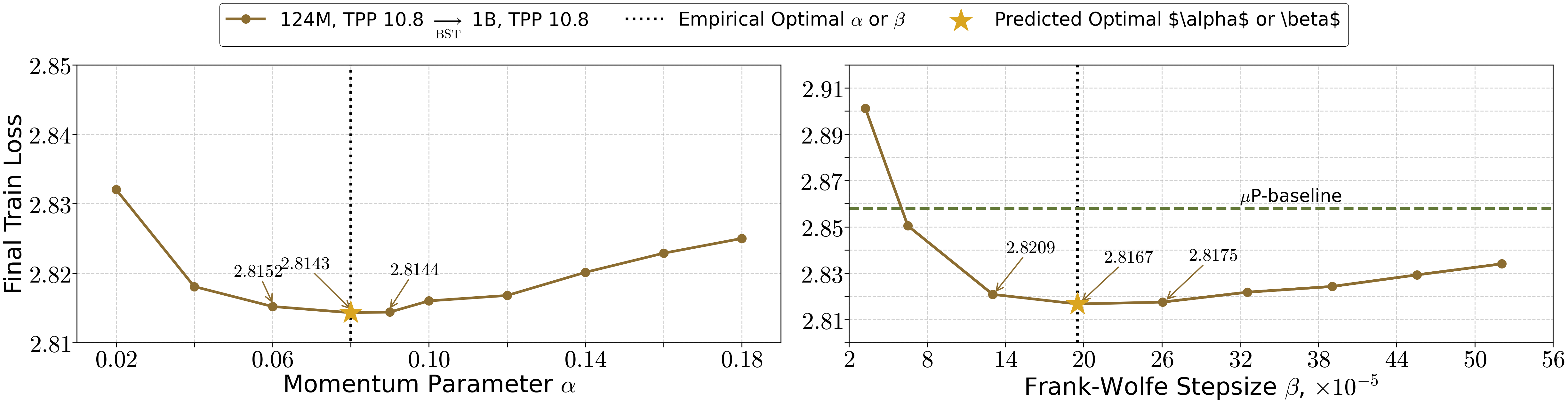} 
			
		\end{tabular}
		\caption{The final performance of the 1B model when varying the Frank--Wolfe stepsize $\beta$ ({\bf left}) and momentum parameter ({\bf right}) under different token budget 10.6 TPP. We report the final train loss, smoothed in the window of size $500$. We observe that the BST scaling rule predicts a good estimate for both optimal $\alpha$ and $\beta$ when transferring from a smaller 124M model to a larger 1B model.
		}
		\label{fig:alpha_lr_transfer}
	\end{figure}

	\subsection{Hyperparameter Transfer}

	From \Cref{tab:batch_size_abl_124M}, we know that for a base 124M model, the optimal set of hyperparameters is $B=256, S=1024, \beta=3.6\cdot10^{-4}$ under token budget $T=1.3$B (TPP 10.8). We want to use these parameters in BST rule to obtain them for a larger training horizon or model size. In this section, the reported train losses are averaged over 3 random seeds (only for 124M) and smoothed using running average in the window of size 500 (for both 124M and 1B models). We ignore the requirement for $B$ to be the powers of $2$ in these set of experiments.

	\subsubsection{Increasing Token Budget for 124M Model}
	
	In this section, we report the pretraining results for 124M model under increased token budgets $(i)$ $T=2.7$B (TPP 21.6), $(ii)$ $T=5.3$B (TPP 43.1), and $(iii)$ $T=8.0$B (TPP 64.7). We set a batch size for longer horizons using \eqref{eq:how_to_scale_batch_size}: $B=416$ for $T=2.7$B, $B=672$ for $T=5.3$B, and $B=896$ for $T=8.0$B, using estimates of problem-dependent constants from \eqref{tab:estimated_constants}. Momentum and sequence length are set to $\alpha=0.1$ and $S=1024$, respectively.
	
	To demonstrate the predictive power of the BST scaling rule in finding optimal Frank--Wolfe stepsize $\beta$ in \eqref{eq:how_to_scale_stepsize}, we report the final train losses when varying $\beta$ under three token budgets. Momentum parameter is fixed to $\alpha=0.1$. We expect the optimal Frank--Wolfe stepsize to be around $(i)$ $\beta=3.0\cdot10^{-4}$, $(ii)$ $\beta=2.4\cdot10^{-4}$, and $(iii)$ $\beta=2.1\cdot10^{-4}$. In \Cref{fig:lr_transfer_124M}, we observe that the BST scaling rule predicts Frank--Wolfe stepsize close to the optimal one in all the cases. Moreover, we observe that $\mu$P baseline $(B=256,S=1024,\beta=3.6\cdot10^{-4})$ becomes more suboptimal when increasing the token budget, which demonstrates the limitations of the $\mu$P framework even further.
	
	Next, we switch to testing the BST rule for predicting the optimal value of the momentum parameter $\alpha$. According to the BST rule, $\alpha$ should transfer. Empirical results in \Cref{fig:alpha_transfer_124M} support this claim. For all values of the token budget, the optimal $\alpha$ is close to $0.09$.

	\subsubsection{Increasing Model Size}
	
	Now we want to train a 1B model with batch size $B=1120, S=1024$ under token budget $10.8$B (TPP 10.8). In this setup, we test the predictive power of the BST scaling rule when we change the model size. The value of the batch size is set according to \eqref{eq:how_to_scale_batch_size}, using estimates from \Cref{tab:estimated_constants}. We expect the optimal Frank--Wolfe stepsize to be close to $1.95\cdot10^{-4}$, while the momentum parameter to be close to $0.09$. We report the results in \Cref{fig:alpha_lr_transfer}. We observe that the BST rule provides a good estimation for both the optimal momentum $\alpha$ and Frank--Wolfe stepsize $\beta$, when increasing the model size.
	

	\section{Conclusion}
	We developed a token-budget--aware theory for scaling batch size, sequence length, and Frank--Wolfe stepsize in SCG methods under a $\mu$-KL condition.
	Our analysis reveals a non-monotone dependence of optimization error on the effective batch--sequence scale and yields a principled BST-scaling rule that identifies when increasing batch size is beneficial and when it becomes suboptimal.
	In contrast to hyperparameter transfer approaches that ensure local stability at initialization, our results characterize long-horizon, trajectory-level behavior and explain how hyperparameters should adapt as the token budget grows.
	Empirically, we show that large batches are not inherently detrimental: when scaled according to our theory, jointly adapting $(B,S,\beta)$ improves both token efficiency and convergence in large-scale training.
	
	\section*{Limitations and Future Work}
	
	Note that in our experiments, the remaining hyperparameters, such as the radii $\eta$ and the variance initialization, were adopted directly from \citet{pethick2025scion} without additional tuning. This configuration already yields strong empirical performance for Scion. However, for other model architectures, these hyperparameters may not be readily available. In such cases, we recommend selecting them based on prior literature or performing a small hyperparameter sweep. We expect their precise choice to be less critical for final performance than that of the batch size or the Frank–Wolfe stepsize.
	
	More generally, substantially suboptimal choices of these hyperparameters may affect the predictive accuracy of our BST scaling rule. Determining the minimal set of hyperparameters that must be tuned on a small model to ensure that our theoretical predictions remain practically actionable remains an important open question, which we leave for future work.
	
	Another important question that requires further investigation is the transfer of the momentum parameter. In our experiments, we observe that increasing the model size slightly shifts the range of near-optimal values of the momentum parameter $\alpha$ to lower values ($0.06$–$0.08$), compared to the range ($0.08$–$0.1$) for the base 124M model. This shift may be due to the power-law fits being based on an insufficient number of data points, suboptimal functional dependency, or because additional training parameters, such as the token budget, should be incorporated into the scaling analysis.

	\section*{Acknowledgement}
	
	Rustem Islamov and Aurelien Lucchi acknowledge the financial support of the Swiss National Science Foundation, SNSF grant No 207392. Volkan Cevher acknowledges the financial support of the Swiss National Science Foundation, SNSF grant No 240094. This work was also supported under project ID \# 37 as part of the Swiss AI Initiative, through a grant from the ETH Domain and computational resources provided by the Swiss National Supercomputing Centre (CSCS) under the Alps infrastructure.

	\bibliography{refs}
	\bibliographystyle{plainnat}

	\newpage
	\appendix
	\counterwithin{figure}{section}
	\counterwithin{table}{section}

	\vbox{
		{\hrule height 2pt \vskip 0.15in \vskip -\parskip}
		\centering
		{\LARGE\bf Appendix\par}
		{\vskip 0.2in \vskip -\parskip \hrule height 0.5pt \vskip 0.09in}
	}

	\newcommand\invisiblepart[1]{%
		\refstepcounter{part}%
		\addcontentsline{toc}{part}{\protect\numberline{\thepart}#1}%
	}

	\invisiblepart{Appendix}
	\setcounter{tocdepth}{2}
	\localtableofcontents

	\appendixtrue
	
	\section{Description of the Experimental Setup}\label{sec:experimental_setup}
	
	Our implementation uses Scaled ReLU${}^2$ from  \citet{large2024scalable} (see Appendix B.2), rotary embeddings \citep{su2021roformer} in place of positional embeddings, RMSNorm \citep{zhang2019root} (without learnable parameters following \citet{pethick2025scion}) instead of LayerNorm, and a linear learning rate decay schedule instead of cosine annealing. The choice of radius is taken from \citet{pethick2025scion}: $\eta=50$ for matrix-type layers and $\eta=3000$ for the rest of the layers. To approximate the polar factor of the gradient, we use the Newton-Schulz method with $5$ iterations, following \citep{jordan2024muon}. All other details are reported in \Cref{tab:training_details}.
	
	Note that due to limited GPU availability, the 1B model is trained using checkpointing. This introduces slight fluctuations across runs. Although the random seed is fixed, some variability remains due to nondeterminism in the PyTorch implementation.

	\section{Empirical Verification of Assumptions}\label{sec:empirical_verification_training_setup}

	\subsection{Verification of \Cref{asmp:bounded_variance}}\label{apx:verification_subgaussian}

	In this section, we provide the evolution of the empirical variance throughout the training when varying the batch size and sequence length when training a base 124M model. When we vary the batch size, we keep the sequence length equal $1024$; when we vary the sequence length, we keep the batch size equal $512$. To approximate the full gradient, we sample a mini-batch gradient of size $32768$. In \Cref{fig:variance_evolution}, we demonstrate that after a short initial phase (up to 1k iterations) the empirical variance stabilizes and fluctuates around the average, suggesting that the variance is fixed during most of the training. 
	
	\begin{table}[t]
		\centering
		\caption{The model configurations and training details used in \Cref{sec:experiments}.}
		\label{tab:training_details}
		\resizebox{0.6\linewidth}{!}{
			\begin{tabular}{c|ccc}
				\toprule
				
				{\bf Hyperparameter} &
				{\bf 124M Model} &
				{\bf 775M Model} &
				{\bf 1B Model} \\
				\toprule
				
				{\bf Layers} &
				12 &
				36 &
				18 \\
				
				{\bf Heads} &
				6 &
				20 &
				16 \\
				{\bf Embedding Size} &
				
				768 &
				1280 &
				2048  \\
				{\bf Weight Tying} &
				\multicolumn{3}{c}{Yes} \\
				{\bf Activation Function} &
				\multicolumn{3}{c}{ReLU${}^2$} 
				\\
				{\bf Vocabulary Size} & \multicolumn{3}{c}{50304} \\
				{\bf Dataset} & \multicolumn{3}{c}{FineWeb} \\
				{\bf Warmdown} & \multicolumn{3}{c}{28\% of the total token budget} \\
				{\bf Stepsize Schedule} &
				\multicolumn{3}{c}{$\beta_k = \begin{cases}
						\gamma & \text{if } k < n - m\\
						\gamma \cdot \frac{n-k}{m} & \text{otherwise}
					\end{cases}$} \\
				{\bf Gradient Clipping} &
				\multicolumn{3}{c}{ No} \\
				{\bf Momentum Parameter} &
				\multicolumn{3}{c}{$\alpha=0.1$} \\
				
				{\bf \texttt{lm\_head}/\texttt{embd} Radii} &
				\multicolumn{3}{c}{$3000$} \\
				{\bf Matrix Weights Radii} &
				\multicolumn{3}{c}{$50$} \\
				{\bf Precision} &
				\multicolumn{3}{c}{bf16} \\
				{\bf Device Batch Size} & \makecellnew{32 \\ {\it if the opposite} \\ {\it is not stated explicitly}} & & 16\\

				\bottomrule
				
			\end{tabular}
		} 
	\end{table}
	
	\begin{figure}
		\centering
		\begin{tabular}{cc}
			\hspace{-3mm}\includegraphics[width=0.4\linewidth]{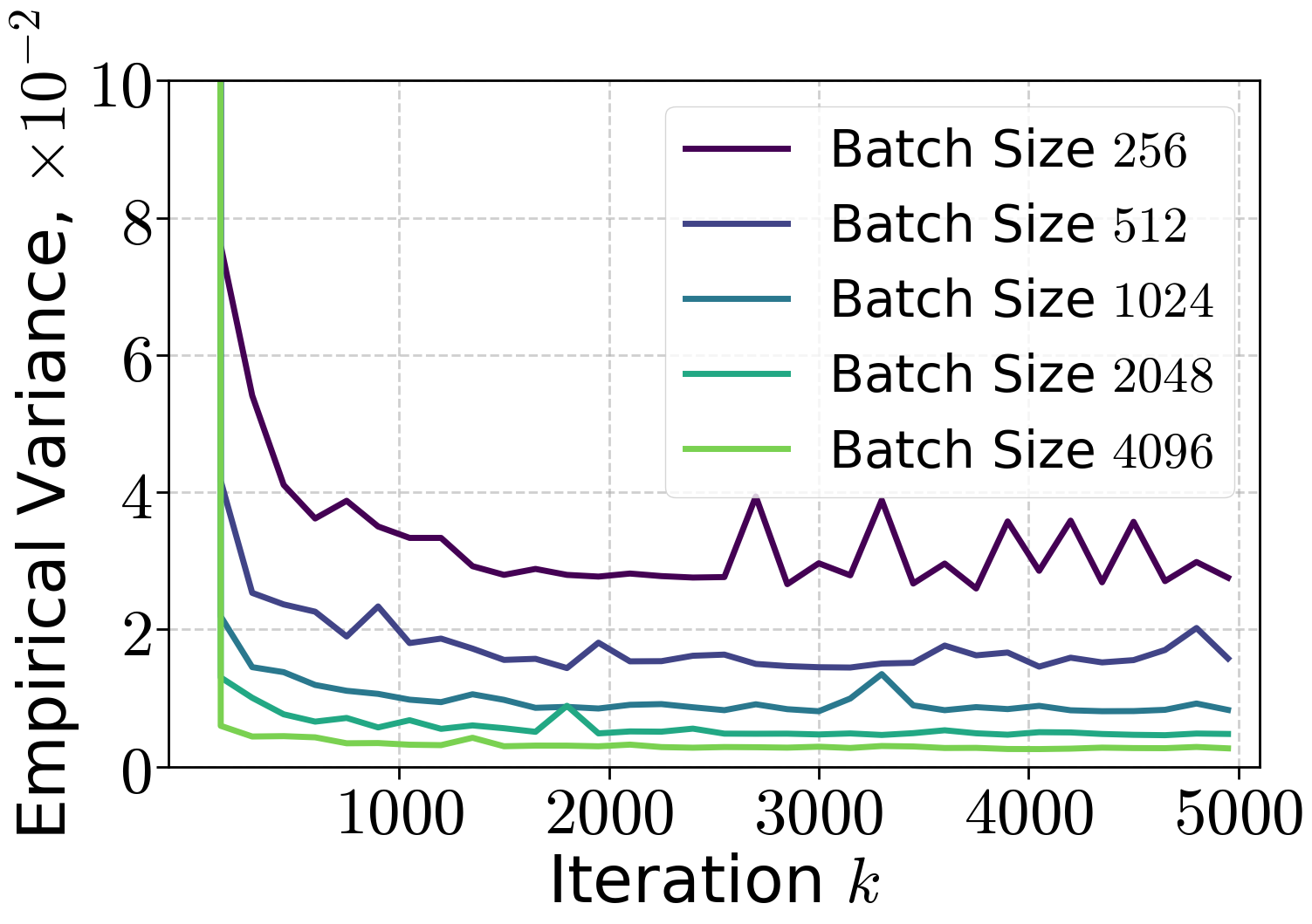} & 
			\hspace{-3mm}
			\includegraphics[width=0.4\linewidth]{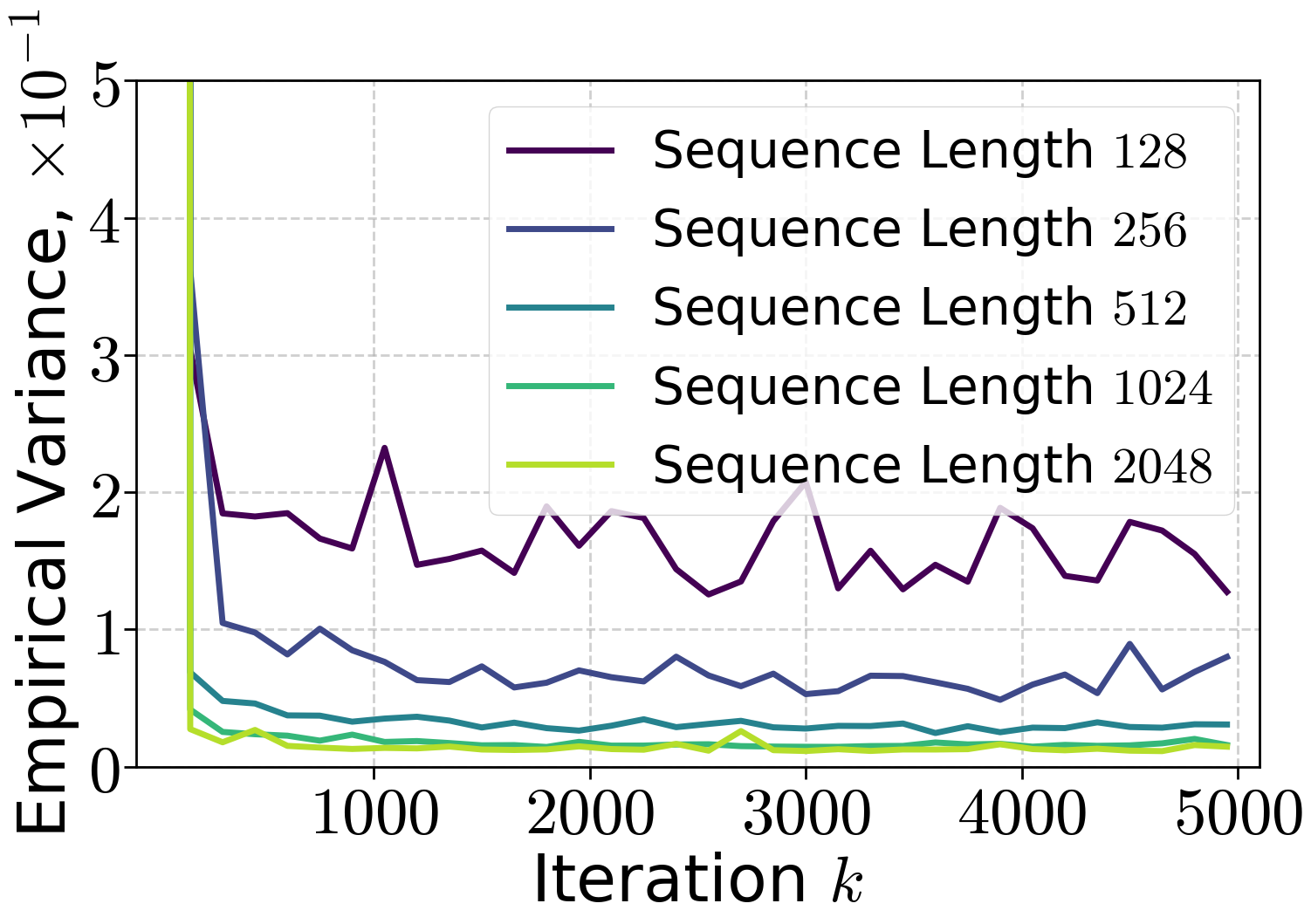}
		\end{tabular}
		\caption{Evolution of the empirical gradient variance varying  batch size $B$ with fixed sequence length $S=1024$ ({\bf left}) and sequence length $S$ with fixed batch size $B=512$ ({\bf right}) when training a 124M NanoGPT model on the FineWeb dataset. We observe that the variance quickly stabilizes after a short initial phase.}
		\label{fig:variance_evolution}
	\end{figure}
	
	\subsection{Verification of Assumptions~\ref{asmp:smoothness}-\ref{asmp:mu_kl} when Varying Model Configuration}\label{apx:verification_mu_kl}
	
	In this section, we provide the estimations of problem-dependent constants $L,\mu,\rho$, varying \texttt{n\_embd} and \texttt{n\_layer}, while keeping \texttt{n\_head}=6. We measure the constants for several configurations with a goal to cover a broader number of configurations. Due to extensive requirements on memory and time resources, we do not provide the measurements for all possible configurations. 
	
	\subsubsection{Estimating the Smoothness Constant $L$}
	
	First, we measure the smoothness constant $L$ using the following estimation 
	$$\frac{\|g(x_k;\xi_k) - g(x_{k-1};\xi_{k-1})\|_*}{\|x_{k}-x_{k-1}\|},$$
	where $g(x_k;\xi_k), g(x_{k-1};\xi_{k-1})$ are mini-batch gradient at two consecutive iterations, while the norms are defined in \eqref{eq:norm_definition}. Based on the results in \Cref{tab:constant_esimations_L}, we fit a power law with shifts of the form
	$$
	L(\texttt{n\_layer},\texttt{n\_embd}) = C(\texttt{n\_layer}+a_0)^{\nu}(\texttt{n\_embd}+b_0)^{\gamma}.
	$$
	We fit the parameters of the power law in log-space, using least squares with \texttt{soft\_l1} loss from \texttt{scipy.optimize} \citep{virtanen2020scipy}. The fit provides the following approximations for the constants of the power law:
	$$
	C=0.4, \quad \nu = 0.2,\quad \gamma=0.35, \quad a_0=0.7, \quad b_0=126.
	$$

	\begin{table}[t]
		\centering
		\caption{Estimated $L$ constant for various model configurations.}
		\label{tab:constant_esimations_L}
		\resizebox{0.7\linewidth}{!}{
			\begin{tabular}{c|cccccccccc}
				\toprule
				
				{\bf \texttt{n\_embd}} $\,\backslash\,$ {\bf \texttt{n\_layer}} & 
				{\bf 3} & {\bf 6} & {\bf 9} & {\bf 12} & {\bf 15} & {\bf 18} & {\bf 21} & {\bf 24} & {\bf 27} & {\bf 30} \\
				\midrule
				{\bf 384}   & -- & 5.3 & 6.2 & 6.5 & 6.2 & 18 & 7.6 & -- & 27 & 7.84 \\
				{\bf 576}   & -- & 6.4 & 7.7 & 7.0 & 7.4 & -- & 7.5 & 8.5 & 7.9 & 8.2 \\
				{\bf 768}   & 6.1 & 6.5 & 6.9 & 7.6 & 8.3 & 18 & 9.9 & 8.5 & 8.7 & 10.0 \\
				{\bf 1152}   & -- & -- & -- & 8.8 & 9.4 & 10.8 & 9.5 & -- & -- & -- \\
				{\bf 1536}  & 3 & 7.9 & 9.2 & 12 & 9.9 & -- & -- & -- & -- & -- \\
				{\bf 2304}  & -- & 9.9 & -- & -- & -- & -- & -- & -- & -- & -- \\
				\bottomrule

			\end{tabular}
		} 
	\end{table}

	\subsubsection{Estimating the Norm Equivalence Constant $\rho$}

	\begin{table}[t]
		\centering
		\caption{Estimated $\rho$ constant for various model configurations.}
		\label{tab:constant_esimations_rho}
		\resizebox{0.7\linewidth}{!}{
			\begin{tabular}{c|cccccccccc}
				\toprule
				
				{\bf \texttt{n\_embd}} $\,\backslash\,$ {\bf \texttt{n\_layer}} & 
				{\bf 3} & {\bf 6} & {\bf 9} & {\bf 12} & {\bf 15} & {\bf 18} & {\bf 21} & {\bf 24} & {\bf 27} & {\bf 30} \\
				\midrule
				{\bf 384}   & -- & 35.5 & 41.3 & 48.2 & 48.7 & -- & 50.7 & -- & -- & 58.0 \\
				{\bf 576}   & -- & 42.1 & 53.8 & 61.1 & 62.9 & -- & 64.2 & 64.6 & 66.2 & 68.6 \\
				{\bf 768}   & 31.2 & 52.6 & 64.1 & 67.1 & 72.4 & -- & 80.2 & 87.0 & 86.3 & 89.2 \\
				{\bf 1152}   & -- & -- & -- & 76.6 & 81.1 & 87.3 & --- & -- & -- & -- \\
				{\bf 1536}  & -- & 67.5 & 77.4 & -- & -- & -- & -- & -- & -- & -- \\
				{\bf 2304}  & -- & -- & -- & -- & -- & -- & -- & -- & -- & -- \\
				\bottomrule

			\end{tabular}
		} 
	\end{table}

	Now we measure the norm equivalence constant $\rho$. We observed that the $\rho$ constant changes not only with the model size but also with the batch size and sequence length. To measure it, we run Scion with batch size $512$ and sequence length $1024$, and the Frank--Wolfe stepsize $\beta=3.6\cdot10^{-4}$. We estimate the $\rho$ constant as follows
	$$
	\frac{\|g(x_k;\xi_k) - G(x_k;\Xi_k)\|_*}{\|g(x_k;\xi_k) - G(x_k;\Xi_k)\|_2},
	$$
	where $g(x_k;\xi_k)$ is a mini-batch gradient of size $512$, while $G(x_k;\Xi_k)$ is a mini-batch gradient of size $8192$, which serves as an approximation of the full gradient. 
	
	We also observe that the constant $\rho$ significantly changes with the batch size. Therefore, we measured how $\rho$ changes with batch size for a configuration with $6$ layers and $768$ embedding dimension in \Cref{tab:rho_batch_size}.
	
	Based on the results in \Cref{tab:constant_esimations_rho} and \Cref{tab:rho_batch_size}, we fit the parameters of the power law of the form
	$$
	\rho(\texttt{n\_layer},\texttt{n\_embd}, \texttt{batch\_size}) = C(\texttt{n\_layer}+a_0)^{\nu}(\texttt{n\_embd}+b_0)^{\gamma}(\texttt{batch\_size}+c_0)^{\delta}
	$$
	in log-space, using least squares with \texttt{soft\_l1} loss from \texttt{scipy.optimize}. The fit provides the following approximations for the constants of the power law:
	$$
	C = 4.1, \quad a_0= -2.7, \quad \nu = 0.25, \quad b_0 = -250.8, \quad \gamma=0.3, \quad c_0=-9.4, \quad \delta = 0.1.
	$$

	\begin{table}[t]
		\centering
		\caption{Estimated $\rho$ constant for a configuration with $6$ layers and $768$ embedding dimension when varying the batch size.}
		\label{tab:rho_batch_size}
		\resizebox{0.5\linewidth}{!}{
			\begin{tabular}{c|ccccc}
				\toprule
				
				{\bf \texttt{batch\_size}} & {\bf 256} & {\bf 512} & {\bf 1024} & {\bf 2048} & {\bf 4096} 
				\\
				\midrule
				$\boldsymbol{\rho}$ &  48.9 & 52.6 & 55.0 & 56.4 & 57.9
				\\
				\bottomrule

			\end{tabular}
		} 
	\end{table}

	\subsubsection{Estimating Kurdyka–Łojasiewicz Constant $\boldsymbol{\mu}$}
	
	\begin{table}[t]
		\centering
		\caption{Estimated Kurdyka–Łojasiewicz constant $\mu$ for various model configurations.}
		\label{tab:constant_esimations_mu}
		\resizebox{0.7\linewidth}{!}{
			\begin{tabular}{c|cccccccccc}
				\toprule
				
				{\bf \texttt{n\_embd}} $\,\backslash\,$ {\bf \texttt{n\_layer}} & 
				{\bf 3} & {\bf 6} & {\bf 9} & {\bf 12} & {\bf 15} & {\bf 18} & {\bf 21} & {\bf 24} & {\bf 27} & {\bf 30} \\
				\midrule
				{\bf 384}   & -- & 3.4 & 3.1 & 3.1 & 3.0 & -- & 2.9 & -- & -- & 2.7 \\
				{\bf 576}   & -- & 3.3 & 3.2 & 3.0 & 2.9 & -- & 2.7 & 2.6 & 2.5 & 2.4 \\
				{\bf 768}   & 3.7 & 3.2 & 3.0 & 2.9 & 2.8 & -- & 2.6 & 2.5 & 2.3 & 2.4 \\
				{\bf 1152}   & -- & -- & -- & 2.7 & 2.7 & 2.8 & 2.6 & -- & 2.5 & -- \\
				{\bf 1536}  & -- & 3.2 & 2.9 & -- & 2.9 & 3.0 & -- & -- & -- & -- \\
				{\bf 2304}  & -- & 3.6 & -- & -- & -- & -- & -- & -- & -- & -- \\
				\bottomrule

			\end{tabular}
		} 
	\end{table}
	Finally, we measure the KL constant $\mu$ by tracking the dual gradient norm and train loss (norms are defined in \eqref{eq:norm_definition}). Then, we fit a linear regression with Huber loss, robust to outliers. The slope of the linear fit serves as an approximation of $\mu$ constant. Based on the results in \Cref{tab:constant_esimations_mu}, we fit the parameters of the power law of the form
	$$
	\mu(\texttt{n\_layer},\texttt{n\_embd}) = C(\texttt{n\_layer}+a_0)^{\nu}(\texttt{n\_embd}+b_0)^{\gamma}.
	$$
	in log-space, using least squares with \texttt{soft\_l1} loss from \texttt{scipy.optimize}. The fit provides the following approximations for the constants of the power law:
	$$
	C=5.2, \quad \nu = 0.2,\quad \gamma=0, \quad a_0=1.7, \quad b_0=-384.
	$$

	\section{Additional Experiments}
	
	\subsection{Additional Baselines in Experiments from \Cref{exp:batch_size_scheduling}}

	In this section, we add additional baselines to the setting from \Cref{exp:batch_size_scheduling}. The idea behind the two new baselines is the following. The literature on the learning theory suggests that the excess risk should decay as $\sim \frac{1}{\sqrt{T}}$ under standard convexity \citep{shalev2009stochastic,liu2024new}, which is the closest setting to $\mu$-KL case due to the relation between $\mu$-KL condition and $\zeta$-quasar convexity, described after \Cref{asmp:mu_kl}. This hypothesizes that we need to keep the optimization error close to the excess risk. In particular, if we find that for a small model the dominating term in \eqref{eq:best_eps} is the first one, then we control it as follows
	$$
	\frac{LB_0S_0}{\mu^2T_0} \sim \frac{\varepsilon_0}{\sqrt{T_0}}.
	$$
	We want to choose parameters $B_1$ and $S_1$ such that the same approximation holds for a larger model. This gives another recipe on how to increase the batch size and sequence length:
	\begin{align}\label{eq:how_to_schedule_batch_size_LT}
		\frac{B_0S_0/T_0}{B_1S_1/T_1} = \frac{1/\sqrt{T_0}}{1/\sqrt{T_1}} \Rightarrow B_1S_1 = B_0S_0 \frac{\sqrt{T_1}}{\sqrt{T_0}}.
	\end{align}
	From \eqref{eq:how_to_scale_stepsize}, we obtain that we should the Frank--Wolfe stepsize of the form
	\begin{equation}\label{eq:how_to_schedule_stepsize_LT}
		\beta_1 = \beta_0\frac{B_1S_1}{B_0S_0}\frac{T_0}{T_1} = \frac{\sqrt{T_0}}{\sqrt{T_1}}.
	\end{equation}
	For a 1B model, this means that we should increase the product $BS$ by a factor $\sqrt{8}$, while decreasing the Frank--Wolfe stepsize by a factor $1/\sqrt{8}$. Performing all derivations, this gives the values of batch size $736$, sequence length $1024,$ and the Frank--Wolfe stepsize $1.27\cdot10^{-4}$. In \Cref{fig:restarted_results_additional}, we add two more baselines: one when we do a restart with the parameters $736$, $1024$, $1.27\cdot10^{-4}$ after a 1.3B token budget, and another one where we use the parameters $736$, $1024$, $1.27\cdot10^{-4}$ from the beginning. These ideas are closely aligned with prior work \citet{compagnoni2025adaptive,compagnoni2025unbiased,mlodozeniec2025completed}, which also propose to rescale the weight decay (equivalent of our Frank--Wolfe stepsize) following the square-root rule.
	
	We observe that the new baselines are also competitive in practice. However, more aggressive BST scaling rules in \eqref{eq:how_to_scale_batch_size} and \eqref{eq:how_to_scale_stepsize} provide slightly better results: the restarted version of BST baseline achieves the best validation loss, while a fixed batch BST baseline slightly outperforms square-root fixed batch baseline. This further supports that our theory-inspired BST scaling rule is predictive and can be used in practice. The notation Sqrt or BST indicates the rule used to select the Frank--Wolfe stepsize.
	
	\begin{figure}
		\centering
		\begin{tabular}{c}
			\includegraphics[width=\linewidth]{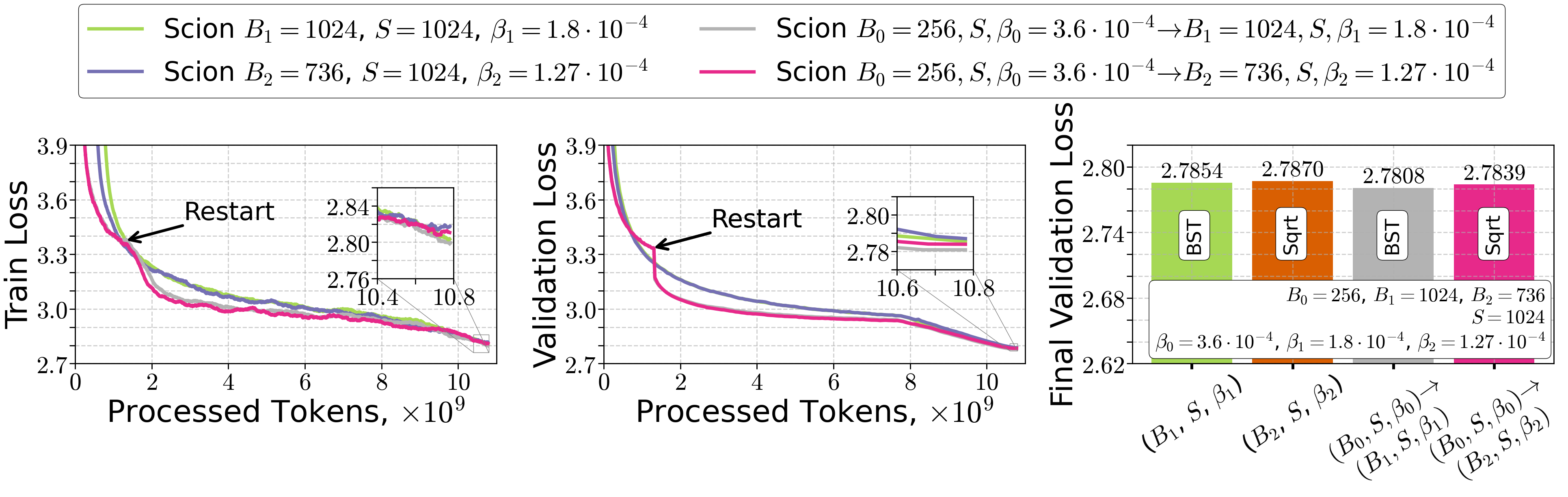} 
			
			
		\end{tabular}
		\caption{Comparison of two strategies: BST scaling rule, where $BS \sim T^{2/3}$ (fixed $1024$ batch size with $\beta_1=1.8\cdot10^{-4}$ in \textcolor{color4}{light green} and restarted version in \textcolor{color7}{gray}) and square-root rule \eqref{eq:how_to_schedule_batch_size_LT}, where $BS \sim T^{1/2}$ (fixed $736$ batch size with $\beta_2=1.27\cdot10^{-4}$ in \textcolor{newcolor1}{orange} and restarted version in \textcolor{newcolor3}{pink}). The validation and train sequence lengths are fixed to $1024$. ( large batch size strategies when training a 1B model. Scion with a batch size of $1024$ (fixed from the beginning or after a restart) achieves slightly better performance compared to the baselines, where the batch size is set according to the square-root rule \eqref{eq:how_to_schedule_batch_size_LT}. The notation $(B_{1,2}, S, \beta_{0,1})$ characterizes which batch size, sequence length, and Frank--Wolfe stepsize are used for the particular setup, respectively. The notation $(B_0,S,\beta_0)\to(B_{1,2}, S,\beta_{1,2})$ characterizes how parameters of Scion change after restart (e.g., batch size increases from $B_0$ to $B_{1,2}$), respectively.
		}
		\label{fig:restarted_results_additional}
	\end{figure}

	\subsection{Effect of the Device Batch Size}
	
	In this section, we evaluate how the device batch size affects the final performance of the 124M model. Note that using a smaller device batch size within a fixed global batch size results in more gradient accumulation steps. In \Cref{fig:quantization}, we report the model's final validation loss as we vary the device batch size and the momentum parameter. We observe that for device batch sizes $8$ and $32$, the performance is closely aligned. The quantization errors become slightly visible for a device batch size of $2$ when using extreme values of momentum far from the optimal value. However, around the optimal momentum parameter, the difference is within one standard deviation.
	
	\begin{figure}
		\centering
		\begin{tabular}{c}
			\includegraphics[width=0.5\linewidth]{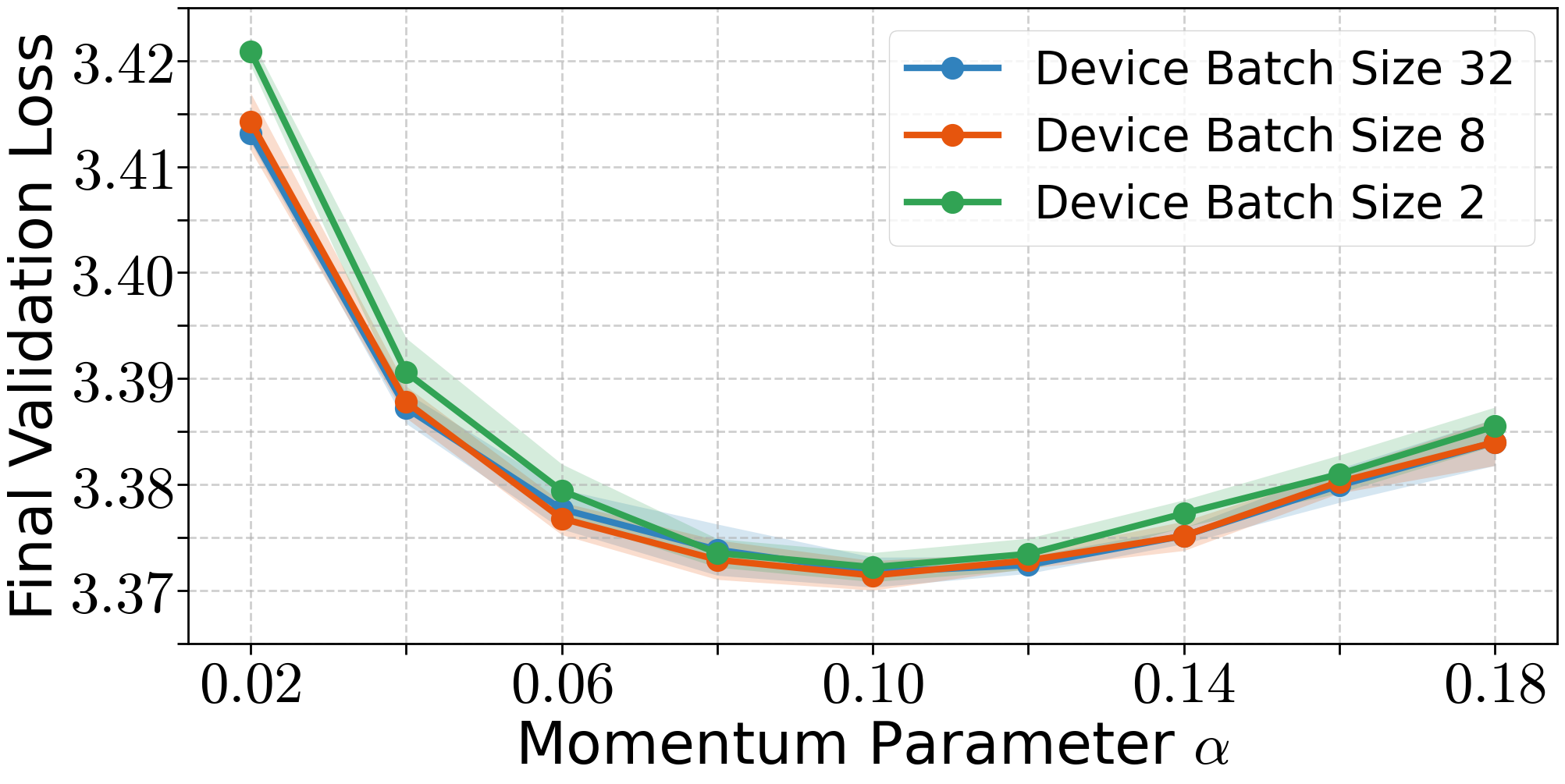} 
			
		\end{tabular}
		\caption{Final performance of 124M model with $B=256, S=1024, \beta=3.6\cdot10^{-4}$ and varying the momentum parameter $\alpha$ and device batch size.
		}
		\label{fig:quantization}
	\end{figure}

	\newpage
	\section{In-Expectation Convergence Proofs for SCG}\label{apx:inexp_convergence_proofs_no_restarts}

	The proof structure is inspired by the analysis of the first-order  stochastic trust-region method with momentum under star-convexity by \citet{kovalev2025orthogonalization}.
	
	\begin{lemma}\label{lem:bound_x_mu_kl_no_restarts} Let assumptions \eqref{eq:smoothness} and \eqref{eq:mu_kl} hold. Assume that $x_0$ and $\eta$ are chosen such that  
		\begin{equation}\label{eq:lemma1_1_mu_kl_no_restarts}
			2\|x_0\| \le \eta, \quad \beta = \frac{c}{K}, \quad \text{and} \quad K\ge 2c.
		\end{equation}
		Let $\{x_k\}$ be the iterates of \Cref{alg:spectral_gd_decay_fw} . Then, the following inequalities hold for all $k\in\{0,1\ldots,K-1\}$
		\begin{align}\label{eq:bound_x_mu_kl_no_restarts} 
			\eta - \|x_k\| \ge (1-\beta)^{k}\frac{\eta}{2}, \quad  \|x_{k+1}-x_k\|\le 2\beta\eta.
		\end{align}
	\end{lemma}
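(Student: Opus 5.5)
The plan is to show the first inequality by induction on $k$, using only the update $x_{k+1} = (1-\beta)x_k + \beta\eta d_{k+1}$ together with $\|d_{k+1}\|\le 1$. Neither smoothness nor the $\mu$-KL condition is needed, since the claim is purely geometric.

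\textbf{Base case and one-step recursion.} For $k=0$ the hypothesis $2\|x_0\|\le\eta$ gives $\eta-\|x_0\|\ge \eta/2 = (1-\beta)^0\eta/2$. For the inductive step, the triangle inequality yields
\begin{equation*}
\|x_{k+1}\| \le (1-\beta)\|x_k\| + \beta\eta .
\end{equation*}
Rearranging,
\begin{equation*}
\eta - \|x_{k+1}\| \ge \eta - (1-\beta)\|x_k\| - \beta\eta = (1-\beta)(\eta-\|x_k\|).
\end{equation*}
By the inductive hypothesis the right-hand side is at least $(1-\beta)^{k+1}\eta/2$. Here $\beta = c/K\le 1/2<1$ because $K\ge 2c$, so $1-\beta>0$ and multiplying the inequality by $1-\beta$ preserves its direction. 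In fact the induction runs for every $k\le K$, which covers the stated range $\{0,\dots,K-1\}$.

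\textbf{Step bound.} Write
\begin{equation*}
x_{k+1}-x_k = \beta(\eta d_{k+1} - x_k).
\end{equation*}
Then $\|x_{k+1}-x_k\|\le \beta(\eta + \|x_k\|)$. From the first inequality, $\|x_k\|\le \eta - (1-\beta)^k\eta/2 \le \eta$, so $\|x_{k+1}-x_k\|\le 2\beta\eta$.

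\textbf{Main obstacle.} I expect no real difficulty. The only points to check are that $\beta<1$, which the condition $K\ge 2c$ guarantees, and that the induction is started from the hypothesis $2\|x_0\|\le\eta$. The factor $(1-\beta)^k\ge(1-c/K)^K\ge 4^{-c}$ is not needed here, but it is presumably the reason for the condition $K\ge 2c$ in later lemmas.
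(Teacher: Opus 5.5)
Your proposal is correct and follows essentially the same route as the paper. The paper inducts on the equivalent bound $\|x_k\|\le (1-\beta)^k\frac{\eta}{2}+\eta(1-(1-\beta)^k)$ using the same triangle-inequality step, then bounds the step by $\beta(\|x_k\|+\eta)\le 2\beta\eta$; its extra estimate $(1-\beta)^K\ge e^{-3c/2}$ plays the role of your $4^{-c}$ and is needed only later, in the theorem.
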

	\begin{proof}
		We show by induction $k$ that 
		$$\|x_k\| \le (1-\beta)^{k}\frac{\eta}{2} + \eta(1-(1-\beta)^{k})$$
		holds. The base of induction is $k=0$. Note that $\|x_0\| \le \frac{\eta}{2}$ holds by the choice of $\eta$ and $x_0$ in \eqref{eq:lemma1_1_mu_kl_no_restarts}. Assume that inequalities hold for some $k \in\{0, 1,\ldots K - 2\}$. We show that they also hold at iteration $k+1$. Indeed, we have 
		\begin{align*}
			\|x_{k+1}\|
			&\aleq{uses the update step}
			\|(1-\beta)x_k + \beta\eta d_{k+1}\|
			\aleq{uses the triangle inequality}
			(1-\beta)\|x_k\| + \beta\eta\|d_{k+1}\|\\
			& \aleq{uses the restriction on $d_{k+1}$ in  and induction hypothesis}
			(1-\beta)\left((1-\beta)^{k}\frac{\eta}{2} + \eta(1-(1-\beta)^{k})\right)
			+ \beta\eta\\
			&=(1-\beta)^{k+1}\frac{\eta}{2} + \eta((1-\beta) - (1-\beta)^{k+1} + \beta) = (1-\beta)^{k+1}\frac{\eta}{2} + \eta(1 - (1-\beta)^{k+1}),
		\end{align*}
		where \annotate. This concludes the induction step and proves the first inequality in \eqref{eq:bound_x_mu_kl_no_restarts} for all $k\in\{0,1,\ldots,K\}.$ We can lower bound $(1-\beta)^K$ using the inequality $\log(1-y)\ge -y-y^2$ for all $y\in[0,0.5]$ and $K\ge 2c$ as follows
		\begin{equation}
			\log((1-\beta)^K) = K\log(1-\beta) \ge K(-\beta-\beta^2)= -c-\frac{c^2}{K} \ge -\frac{3c}{2}.
		\end{equation}
		This implies that $(1-\beta)^K \ge e^{-\nicefrac{3c}{2}}$. Using the obtained bound, we derive for all $k\in\{0,1,\ldots,K\}$
		\begin{align}\label{eq:x_k_bound_by_eta}
			\|x_{k}\| &\le (1-\beta)^{K}\frac{\eta}{2} + \eta(1-(1-\beta)^{K})
			\le \eta - \frac{\eta}{2}e^{-\nicefrac{3c}{2}}.
		\end{align}
		Now we prove the last inequality in \eqref{eq:bound_x_mu_kl_no_restarts}. We have
		\begin{align*}
			\|x_{k+1} - x_k\| &\aeq{uses the update rule} \|-\beta x_k + \beta\eta d_{k+1}\| \\
			&\aleq{uses the triangle inequality} \beta\|x_k\| + \beta\eta\|d_{k+1}\| \\
			&\aleq{uses the previous inequality and the restriction on $d_{k+1}$} \beta((1-\beta)^{K}\frac{\eta}{2} + \eta(1-(1-\beta)^{K})) + \beta\eta\\
			&=\beta\eta((1-\beta)^{K}/2 + 1-(1-\beta)^{K} + 1) \\
			&= \beta\eta(2 - (1-\beta)^{K}/2) \le 2\beta\eta,
		\end{align*}
		where \annotate.
	\end{proof}

	\begin{lemma}\label{lem:momentum_decay_inexp_fw_no_restarts}
		Let Assumptions \eqref{eq:smoothness}, \eqref{eq:norm_equiv} and \eqref{eq:bounded_variance} hold. Let $m_{0}=g(x_0;\xi_0)$, then the iterates of \Cref{alg:spectral_gd_decay_fw}  satisfy the following inequality:
		\begin{equation}
			\EE[\norm{m_{k+1} - \nabla f(x_{k})}_*] \leq (1-\alpha)^{k}\rho\sigma + \frac{2L\beta\eta}{\alpha}
			+ \rho \sigma\sqrt{\alpha}. \notag
		\end{equation}
	\end{lemma}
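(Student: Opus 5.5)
We unroll the momentum recursion and split the error into a noise part and a drift part. Write $\epsilon_k \eqdef m_{k+1} - \nabla f(x_k)$ and $\zeta_k \eqdef g(x_k;\xi_k) - \nabla f(x_k)$. From the update $m_{k+1} = (1-\alpha)m_k + \alpha g(x_k;\xi_k)$ we get
\begin{equation*}
\epsilon_k = (1-\alpha)\epsilon_{k-1} + (1-\alpha)\big(\nabla f(x_{k-1}) - \nabla f(x_k)\big) + \alpha\zeta_k .
\end{equation*}
For $k=0$ we use $m_0 = g(x_0;\xi_0)$, so the initial error $m_0-\nabla f(x_0)$ is a noise term of magnitude $\rho\sigma$ in expectation. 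Unrolling gives $\epsilon_k$ as the sum of three pieces:
\begin{itemize}
\item[(i)] the geometrically damped initial error $(1-\alpha)^{k}(m_0 - \nabla f(x_0))$, up to how the $k=0$ step is indexed;
\item[(ii)] a drift sum $\sum_{j=1}^{k}(1-\alpha)^{k-j+1}(\nabla f(x_{j-1})-\nabla f(x_j))$;
\item[(iii)] a noise sum $\alpha\sum_{j\ge 1}(1-\alpha)^{k-j}\zeta_j$.
\end{itemize}
Taking $\|\cdot\|_*$ and the triangle inequality, we bound each piece separately in expectation.

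\textbf{Term (i).} By \eqref{eq:norm_equiv} and Jensen, $\EE\|m_0-\nabla f(x_0)\|_* \le \rho\,\EE\|\zeta_0\|_2 \le \rho\sigma$. This yields $(1-\alpha)^k\rho\sigma$. If the indexing produces an extra factor $(1-\alpha)$, it only helps.

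\textbf{Term (ii).} By \eqref{eq:smoothness} and the second bound of \Cref{lem:bound_x_mu_kl_no_restarts}, each difference satisfies $\|\nabla f(x_{j-1})-\nabla f(x_j)\|_* \le L\|x_j-x_{j-1}\|\le 2L\beta\eta$. Summing the geometric series $\sum_{i\ge1}(1-\alpha)^i \le 1/\alpha$ gives $2L\beta\eta/\alpha$. This needs the hypotheses of that lemma ($2\|x_0\|\le\eta$, $\beta=c/K$). If the present statement is meant without them, I would instead bound $\|x_{j}-x_{j-1}\|\le 2\beta\eta$ directly, which holds as long as $\|x_j\|\le\eta$.

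\textbf{Term (iii).} This is the main step and needs care, because the dual norm is not Hilbertian. We first pass to the Euclidean norm via \eqref{eq:norm_equiv}, so it suffices to bound $\rho\,\EE\|\alpha\sum_j(1-\alpha)^{k-j}\zeta_j\|_2$. Then Jensen gives
\begin{equation*}
\EE\|\cdot\|_2 \le \sqrt{\EE\|\cdot\|_2^2}.
\end{equation*}
The $\zeta_j$ form a martingale difference sequence with respect to the filtration generated by $\xi_0,\dots,\xi_{j-1}$: $x_j$ is measurable before $\xi_j$ is drawn, and $g$ is unbiased. Hence the cross terms vanish and
\begin{equation*}
\EE\|\cdot\|_2^2 \le \alpha^2\sigma^2\sum_i(1-\alpha)^{2i} \le \alpha^2\sigma^2/(1-(1-\alpha)^2) \le \alpha\sigma^2 .
\end{equation*}
Taking the square root gives $\rho\sigma\sqrt{\alpha}$.

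The only subtlety is verifying the conditional-expectation structure, so that the cross terms vanish despite $x_j$ depending on past noise. Once that holds, adding the three bounds gives the claim.
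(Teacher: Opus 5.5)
Your proposal is correct and follows essentially the same route as the paper. Like the paper, you unroll the momentum recursion, apply the triangle inequality, bound the drift via \eqref{eq:smoothness} and \Cref{lem:bound_x_mu_kl_no_restarts}, and bound the noise by passing to $\|\cdot\|_2$ with \eqref{eq:norm_equiv}, using Jensen and vanishing cross terms; your martingale-difference justification of those cross terms is more careful than the paper's bare appeal to independence of the samples $\xi_i$.
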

	\begin{proof}
		We can express $m_{k+1} - \nabla f(x_{k})$ as follows using the definition of the momentum buffer in \Cref{alg:spectral_gd_decay_fw}
		\begin{align*}
			m_{k+1} - \nabla f(x_k)
			&=
			(1-\alpha)m_k + \alpha g(x_k;\xi_k) - \nabla f(x_k)
			\\&=
			(1-\alpha)(m_k - \nabla f(x_{k-1})) + \alpha(g(x_k;\xi_k) - \nabla f(x_k))
			\\&
			+(1-\alpha)(\nabla f(x_{k-1}) - \nabla f(x_k)).
		\end{align*}
		This implies the following for all $k \geq 0$:
		\begin{align*}
			m_{k+1} - \nabla f(x_k)
			&=
			(1-\alpha)^{k}(m_{1} - \nabla f(x_0))
			+ \sum_{i=0}^{k-1}(1-\alpha)^{k-i}(\nabla f(x_{i}) - \nabla f(x_{i+1}))
			\\&
			+\sum_{i=1}^{k}\alpha(1-\alpha)^{k-i}(g(x_{i},\xi_{i}) - \nabla f(x_{i})).
		\end{align*}
		Using this decomposition, we can upper-bound $\norm{m_{k+1} - \nabla f(x_k)}_*$  as follows
		\begin{align*}
			\norm{m_{k+1} - \nabla f(x_k)}_*
			&\aleq{uses the triangle inequality}
			(1-\alpha)^{k}\norm{m_{1} - \nabla f(x_{0})}_* + \sum_{i=0}^{k-1}(1-\alpha)^{k-i}\norm{\nabla f(x_{i}) - \nabla f(x_{i+1})}_*
			\\&
			\quad+\norm{\sum_{i=1}^{k}\alpha(1-\alpha)^{k-i}(g(x_i,\xi_i) - \nabla f(x_i))}_*
			\\&\aleq{uses \eqref{eq:smoothness} and \Cref{lem:bound_x_mu_kl_no_restarts} with $L, \beta, \eta$}
			(1-\alpha)^{k}\norm{m_{1} - \nabla f(x_{0})}_*\\
			&\quad+ \sum_{i=0}^{k-1}(1-\alpha)^{k-i}2L\beta\eta+\norm{\sum_{i=1}^{k}\alpha(1-\alpha)^{k-i}(g(x_i,\xi_i) - \nabla f(x_i))}_*
			\\&\aleq{uses \eqref{eq:norm_equiv}}
			(1-\alpha)^{k}\rho\norm{m_{1} - \nabla f(x_{0})}_2
			+ \sum_{i=0}^{k-1}(1-\alpha)^{k-i}2L\beta\eta
			\\&\quad 
			+\rho\norm{\sum_{i=1}^{k}\alpha(1-\alpha)^{k-i}(g(x_i,\xi_i) - \nabla f(x_i))}_2,
		\end{align*}
		where \annotate. Next, we take the full expectation and get
		\begin{align*}
			\EE\left[\norm{m_{k+1} - \nabla f(x_k)}_*\right] &\leq (1-\alpha)^{k}\rho\EE[\norm{m_{1} - \nabla f(x_{0})}_2]
			+ \sum_{i=0}^{k-1}(1-\alpha)^{k-i}2L\beta\eta
			\\&\quad 
			+\rho\EE\left[\norm{\sum_{i=1}^{k}\alpha(1-\alpha)^{k-i}(g(x_i,\xi_i) - \nabla f(x_i))}_2\right]\\
			&\aleq{uses Jensen's inequality} (1-\alpha)^{k}\rho\sqrt{\EE[\norm{m_{1} - \nabla f(x_{0})}_2^2]}
			+ \sum_{i=0}^{k-1}(1-\alpha)^{k-i}2L\beta\eta
			\\&\quad 
			+\rho\sqrt{\EE\left[\norm{\sum_{i=1}^{k}\alpha(1-\alpha)^{k-i}(g(x_i,\xi_i) - \nabla f(x_i))}_2^2\right]}\\
			&\aleq{uses \eqref{eq:bounded_variance} and the fact that samples $\xi_i \sim \mathcal{D}$ are independent} (1-\alpha)^{k}\rho\sigma + \sum_{i=0}^{k-1}(1-\alpha)^{k-i}2L\beta\eta + \alpha \rho\sigma \sqrt{\sum\limits_{i=1}^k(1-\alpha)^{2(k-i)}}\\
			&\leq (1-\alpha)^{k}\rho\sigma + \frac{2L\beta\eta}{\alpha} + \sqrt{\alpha}\rho\sigma,
		\end{align*}
		where \annotate.
	\end{proof}
	
	\begin{theorem}[Full statement of \Cref{thm:str_decay_mu_kl_expectation_no_restarts}]\label{thm:str_decay_mu_kl_expectation_no_restarts_full_statement} Let Assumption \eqref{eq:smoothness}, \eqref{eq:norm_equiv}, \eqref{eq:mu_kl}, \eqref{eq:bounded_variance} hold. Let $m_{0} = g(x_{0};\xi_{0})$
		and $c > 0$. Let the parameters of \Cref{alg:spectral_gd_decay_fw}  are chosen as follows
		\begin{align}\label{eq:choice_parameters_1_no_restarts_inexp}
			\beta=\frac{c}{K}, \quad 
			\eta = \frac{2e^{3c/2}}{\mu c}\log\left(\frac{2(f(x_0)-f^{\star})}{\varepsilon}\right), 
			\quad 2\|x_0\| \le \eta,
		\end{align}
		and 
		\begin{align}\label{eq:choice_parameters_2_no_restarts_inexp}
			\alpha &= \min\left\{1, \frac{(\varepsilon\mu)^2}{(32\rho\sigma)^2e^{3c}} \right\},\\
			K &= \max\left[
			2c,
			\max\left\{\frac{1}{2}, \frac{128Le^{3c}}{\varepsilon\mu^2},
			\frac{32\rho\sigma e^{\nicefrac{3c}{2}}}{\varepsilon\mu},
			\frac{128Le^{6c}(32\rho\sigma)^2}{\mu(\varepsilon\mu)^3},
			\frac{(32\rho\sigma e^{\nicefrac{3c}{2}})^3}{(\varepsilon\mu)^3}\right\}\log\left(\frac{2(f(x_0)-f^{\star})}{\varepsilon}\right) 
			\right].\notag
		\end{align}
		Then the output of \Cref{alg:spectral_gd_decay_fw} after $K$ iterations satisfies $\EE[f(x_K) - f^{\star}] \le \varepsilon$.
	\end{theorem}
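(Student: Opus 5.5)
The plan is a one-step descent inequality for $\delta_k \eqdef \EE[f(x_k)-f^\star]$ that contracts at rate $\tfrac{1}{K}\log\tfrac{2(f(x_0)-f^\star)}{\varepsilon}$. I then unroll it and control the momentum error with \Cref{lem:momentum_decay_inexp_fw_no_restarts}. Throughout I use \Cref{lem:bound_x_mu_kl_no_restarts}, which is applicable since $\beta=c/K$, $K\ge 2c$ and $2\|x_0\|\le\eta$. It gives $\|x_{k+1}-x_k\|\le 2\beta\eta$ and $\eta-\|x_k\|\ge (1-\beta)^k\tfrac{\eta}{2}\ge \tfrac{\eta}{2}e^{-3c/2}$, the last step by the bound $(1-\beta)^K\ge e^{-3c/2}$ from its proof.

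\textbf{Step 1 (descent).} By \eqref{eq:smoothness}, $f(x_{k+1})\le f(x_k)+\beta\langle \nabla f(x_k),\eta d_{k+1}-x_k\rangle + 2L\beta^2\eta^2$. Let $u_k$ with $\|u_k\|\le 1$ attain $\langle\nabla f(x_k),u_k\rangle=-\|\nabla f(x_k)\|_*$. Optimality of $d_{k+1}$ for the LMO gives $\langle m_{k+1},d_{k+1}\rangle\le\langle m_{k+1},u_k\rangle$. Swapping $\nabla f(x_k)$ for $m_{k+1}$ and back, and using $\|\eta d_{k+1}-x_k\|,\|\eta u_k - x_k\|\le 2\eta$, I get
\[
\langle \nabla f(x_k),\eta d_{k+1}-x_k\rangle \le -\eta\|\nabla f(x_k)\|_* - \langle \nabla f(x_k),x_k\rangle + 4\eta\|m_{k+1}-\nabla f(x_k)\|_* .
\]
Next, $-\eta\|\nabla f\|_*-\langle\nabla f,x_k\rangle\le-(\eta-\|x_k\|)\|\nabla f(x_k)\|_*\le -\tfrac{\eta}{2}e^{-3c/2}\|\nabla f(x_k)\|_*$. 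Applying \eqref{eq:mu_kl} bounds this by $-\tfrac{\eta\mu}{2}e^{-3c/2}(f(x_k)-f^\star)$. With the prescribed $\eta$, $\tfrac{\beta\eta\mu}{2}e^{-3c/2}=\tfrac{1}{K}\log\tfrac{2(f(x_0)-f^\star)}{\varepsilon}\eqdef q$. Taking expectations gives
\[
\delta_{k+1}\le(1-q)\delta_k+4\beta\eta\,\EE\|m_{k+1}-\nabla f(x_k)\|_*+2L\beta^2\eta^2 .
\]

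\textbf{Step 2 (unrolling).} Unrolling and using $(1-q)^K\le e^{-qK}=\tfrac{\varepsilon}{2(f(x_0)-f^\star)}$ gives $\delta_K\le \varepsilon/2 + (\text{accumulated errors})$. Next I insert \Cref{lem:momentum_decay_inexp_fw_no_restarts}, $\EE\|m_{k+1}-\nabla f(x_k)\|_*\le(1-\alpha)^k\rho\sigma+2L\beta\eta/\alpha+\rho\sigma\sqrt\alpha$. The stationary parts, summed against $(1-q)^{K-1-k}$, are multiplied by at most $1/q$. Since $\beta\eta/q = 2e^{3c/2}/\mu$, they contribute
\[
\tfrac{2e^{3c/2}}{\mu}\Big(4\rho\sigma\sqrt\alpha+\tfrac{8L\beta\eta}{\alpha}+2L\beta\eta\Big).
\]
The transient part is bounded by $4\beta\eta\rho\sigma\sum_k(1-\alpha)^k\le 4\beta\eta\rho\sigma/\alpha$, i.e.\ by $\min\{1/\alpha,1/q\}$ times $4\beta\eta\rho\sigma$.

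\textbf{Step 3 (bookkeeping).} Finally I make each of the four error terms at most $\varepsilon/8$.
\begin{itemize}
\item $\rho\sigma\sqrt\alpha$: this is exactly the choice $\alpha\le(\varepsilon\mu)^2/((32\rho\sigma)^2e^{3c})$.
\item $L\beta\eta e^{3c/2}/\mu$: substituting $\beta\eta=\tfrac{2e^{3c/2}}{\mu K}\log(\cdot)$, this needs $K\gtrsim Le^{3c}\log(\cdot)/(\varepsilon\mu^2)$.
\item $L\beta\eta e^{3c/2}/(\mu\alpha)$: with the above $\alpha$, this needs the $L(\rho\sigma)^2/(\mu(\varepsilon\mu)^3)$ term in $K$.
\item Transient term $\beta\eta\rho\sigma/\alpha$: this gives the $(\rho\sigma)^3/(\varepsilon\mu)^3$ term. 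The $\rho\sigma/(\varepsilon\mu)$ term covers the case $\alpha=1$.
\end{itemize}

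The main obstacle is expected to be the transient momentum-initialization contribution $(1-\alpha)^k\rho\sigma$. It is not damped by the $1/q$ mechanism but by $1/\alpha$, which is why the cubic noise term appears in $K$. I would treat it by bounding the sum by $1/\alpha$ directly, and would check separately the $\alpha=1$ branch, where the momentum error is just $\rho\sigma$. The constants $32$ and $128$ in $K$ should then fall out of matching each term to $\varepsilon/8$.
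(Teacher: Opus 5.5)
Your outline matches the paper's proof step for step. It uses the same lemmas (\Cref{lem:bound_x_mu_kl_no_restarts} and \Cref{lem:momentum_decay_inexp_fw_no_restarts}), the same contraction $q=\tfrac1K\log\tfrac{2(f(x_0)-f^\star)}{\varepsilon}$, $1/q$ for the stationary errors, $1/\alpha$ for the transient one, and four terms matched to $\varepsilon/8$. However, the bookkeeping does not close with the constants in the statement.

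In Step 1 you swap $m_{k+1}$ for $\nabla f(x_k)$ against the whole vector $\eta d_{k+1}-x_k$, and then back against $\eta u_k-x_k$. This pays $2\eta$ twice, so your recursion carries $4\beta\eta\,\EE\|m_{k+1}-\nabla f(x_k)\|_*$, whereas the constants $32$ and $128$ are calibrated to a coefficient $2\beta\eta$. With your coefficient, the first bullet is not ``exactly'' the choice of $\alpha$: the term is $\tfrac{8e^{3c/2}\rho\sigma\sqrt{\alpha}}{\mu}$, which equals $\tfrac{\varepsilon}{4}$ whenever $\alpha<1$. Likewise, with $\beta\eta=\tfrac{2e^{3c/2}}{\mu K}\log\tfrac{2(f(x_0)-f^\star)}{\varepsilon}$ and the prescribed $K$, you only get $\tfrac{16L\beta\eta e^{3c/2}}{\mu\alpha}\le\tfrac{\varepsilon}{4}$ and $\tfrac{4\beta\eta\rho\sigma}{\alpha}\le\tfrac{\varepsilon}{4}$, alongside $\tfrac{4L\beta\eta e^{3c/2}}{\mu}\le\tfrac{\varepsilon}{16}$. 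The total is $\tfrac{\varepsilon}{2}+\tfrac{3\varepsilon}{4}+\tfrac{\varepsilon}{16}=\tfrac{21}{16}\varepsilon$. As planned, you therefore prove the result only with larger constants, not the stated theorem.

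The fix is to perform the LMO swap only on the $d_{k+1}$ part and leave $-\langle\nabla f(x_k),x_k\rangle$ alone, as the paper does:
\[
\eta\langle\nabla f(x_k),d_{k+1}\rangle\le\eta\langle\nabla f(x_k)-m_{k+1},d_{k+1}-u_k\rangle+\eta\langle\nabla f(x_k),u_k\rangle\le 2\eta\|m_{k+1}-\nabla f(x_k)\|_*-\eta\|\nabla f(x_k)\|_*.
\]
The four errors then become $\tfrac{4e^{3c/2}\rho\sigma\sqrt{\alpha}}{\mu}$, $\tfrac{8L\beta\eta e^{3c/2}}{\mu\alpha}$, $\tfrac{4L\beta\eta e^{3c/2}}{\mu}$ and $\tfrac{2\beta\eta\rho\sigma}{\alpha}$. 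These are at most $\tfrac{\varepsilon}{8}$, $\tfrac{\varepsilon}{8}$, $\tfrac{\varepsilon}{16}$ and $\tfrac{\varepsilon}{8}$, and the argument closes.

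Separately, both the unrolling and the bound $(1-q)^K\le e^{-qK}$ need $q\le 1$, i.e.\ $K\ge\log\tfrac{2(f(x_0)-f^\star)}{\varepsilon}$, which you do not check. The entry $\tfrac12$ in the max does not give this. It does follow from the $\tfrac{128Le^{3c}}{\varepsilon\mu^2}$ entry: \eqref{eq:smoothness} and \eqref{eq:mu_kl} together force $f(x_0)-f^\star\le 2L/\mu^2$, so $\tfrac{128Le^{3c}}{\varepsilon\mu^2}>32e^{3c}$ whenever the logarithm is positive.
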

	\begin{remark}
		The choice of $\eta \sim \log\left(\frac{2(f(x_0)-f^{\star})}{\varepsilon}\right)$ and $2\|x_0\| \leq \eta$ ensures a sufficient contraction factor in front of $f(x_k)-f^\star$ in the proof. We also note that all iterates produced by Algorithm~\ref{alg:spectral_gd_decay_fw} have a bounded norm by $\eta$. However, we do not make any explicit assumptions about $\arg\min_{x\in\cX} f(x)$, e.g., we do not assume its existence or boundedness of its norm by $\eta$. Therefore, for a fixed choice of $\varepsilon$ it is possible that an optimizer has norm larger than $\eta$ while $\EE[f(x_K) - f^{\star}] \le \varepsilon$. 
	\end{remark}

	\begin{proof}
		Let $u_k = {\rm arg}\min_{u\in\cX}\<\nabla f(x_k), u>$ s.t. $\|u\|\le 1$. Then we have
		\begin{align}
			f(x_{k+1}) 
			&\aleq{uses \eqref{eq:smoothness}}
			f(x_k) + \<\nabla f(x_k),x_{k+1} - x_k> + \frac{1}{2}L\sqn{x_{k+1} - x_k}
			\notag\\ &\aeq{uses the update step and \Cref{lem:bound_x_mu_kl_no_restarts}} f(x_k) + \<\nabla f(x_k), -\beta x_k + \beta\eta d_{k+1}> + 2L\beta^2\eta^2
			\notag \\&=
			f(x_k) -\beta\<\nabla f(x_k), x_k> + \beta\eta\<\nabla f(x_k) - m_{k+1},d_{k+1}> + \beta\eta\<m_{k+1},d_{k+1}> + 2L\beta^2\eta^2
			\notag\\ &\aleq{uses the optimality of $d_{k+1}$} 
			f(x_k) -\beta\<\nabla f(x_k), x_k> + \beta\eta\<\nabla f(x_k)-m_{k+1}, d_{k+1}> + \beta\eta\<m_{k+1},u_{k}> + 2L\beta^2\eta^2
			\notag\\ &\aeq{uses $\<\nabla f(x_k),u_k> = -\|\nabla f(x_k)\|_*$}
			f(x_k) -\beta\<\nabla f(x_k), x_k> + \beta\eta\<\nabla f(x_k)-m_{k+1}, d_{k+1} - u_{k}> - \beta\eta\|\nabla f(x_k)\|_* + 2L\beta^2\eta^2
			\notag\\&\aleq{uses Cauchy-Schwarz and $\|d_{k+1}\|,\|u_k\|\le 1$} f(x_k) + \beta\|\nabla f(x_k)\|_*\cdot\|x_k\| + 2\beta\eta\|\nabla f(x_k)-m_{k+1}\|_* - \beta\eta\|\nabla f(x_k)\|_* + 2L\beta^2\eta^2
			\notag\\ 
			&= f(x_k) -\beta\|\nabla f(x_k)\|_*(\eta-\|x_k\|)+ 2\beta\eta\|m_{k+1}-\nabla f(x_k)\|_*  + 2L\beta^2\eta^2
			\notag\\ 
			&\aleq{uses \Cref{lem:bound_x_mu_kl_no_restarts}, \eqref{eq:mu_kl}, and \eqref{eq:x_k_bound_by_eta}} 
			f(x_k) -\frac{\beta\eta\mu}{2}e^{-\nicefrac{3c}{2}}(f(x_k)-f^{\star}) + 2\beta\eta\|m_{k+1}-\nabla f(x_k)\|_* + 2L\beta^2\eta^2,\label{eq:intermediate_decrease_simplified_no_restarts_inexp}
		\end{align}
		where \annotate. With the assumption that $m_{0} = g(x_{0};\xi_{0})$, we have from \Cref{lem:momentum_decay_inexp_fw_no_restarts} that 
		\begin{equation*}
			\EE[\|m_{k+1} - \nabla f(x_k)\|_*] \le (1-\alpha)^{k}\rho\sigma + \frac{2L\beta\eta}{\alpha} + \rho\sigma\sqrt{\alpha}.
		\end{equation*}
		Taking the expectation from \eqref{eq:intermediate_decrease_simplified_no_restarts_inexp} and using this bound and \Cref{lem:bound_x_mu_kl_no_restarts}, we derive
		\begin{align}\label{eq:one_iteration_progress_no_restarts_inexp}
			\EE[f(x_{k+1}) - f^{\star}] &\le \left(1-\frac{\mu\beta\eta}{2e^{\nicefrac{3c}{2}}}\right)\EE[f(x_k)-f^{\star}] + (1-\alpha)^{k}2\beta\eta\rho\sigma + \frac{4L\beta^2\eta^2}{\alpha} + 2\beta\eta\rho\sigma\sqrt{\alpha} \notag\\
			&\hspace{10cm}+ 2L\beta^2\eta^2.
		\end{align}
		The contraction factor $1- \frac{\mu\beta\eta}{2e^{\nicefrac{3c}{2}}} \in (0,1)$ by the choice of $K \ge \frac{1}{2}\log\left(\frac{2(f(x_0)-f^*)}{\varepsilon}\right)$. Unrolling this recursion for all iterations $k\in\{0,1, \ldots, K-1\}$ and using the bound for the geometric series, we guarantee progress such that
		\begin{align}
			\EE[f(x_{K}) - f(x^{\star})] &\leq \left(1-\frac{\mu\beta\eta}{2e^{\nicefrac{3c}{2}}}\right)^{K} (f(x_0) - f(x^{\star})) 
			+ \frac{2\beta\eta\rho\sigma}{\alpha} 
			+ \frac{4\rho\sigma\sqrt{\alpha}}{\mu}e^{\nicefrac{3c}{2}} \notag\\
			&\hspace{8cm}+ \frac{4L\beta\eta}{\mu}e^{\nicefrac{3c}{2}}
			+ \frac{8L\beta\eta}{\alpha\mu}e^{\nicefrac{3c}{2}}.\label{eq:one_stage_progress_no_restarts_inexp}
		\end{align}
		Now we need to bound each of the terms proportionally to $\varepsilon$ using the choice of parameters $\eta,\alpha,\beta, K$ from \eqref{eq:choice_parameters_1_no_restarts_inexp} and \eqref{eq:choice_parameters_2_no_restarts_inexp}. First, we want
		\begin{align*}
			&4\rho\sigma\frac{\sqrt{\alpha}}{\mu}e^{\nicefrac{3c}{2}} \le \frac{\varepsilon}{8} \Rightarrow \alpha \le \frac{(\varepsilon\mu)^2}{(32\rho\sigma)^2e^{3c}}.
		\end{align*}
		We can satisfy the above bound with the choice of $\alpha$ such that
		\begin{align}
			\alpha = \min\left\{1, \frac{(\varepsilon\mu)^2}{(32\rho\sigma)^2e^{3c}} \right\},\label{eq:choice_alpha_high_prob_no_restarts_inexp}
		\end{align}
		which is exactly the choice of $\alpha$ in \eqref{eq:choice_parameters_2_no_restarts_inexp}. Next, we want 
		\begin{align*}
			&\frac{8Le^{\nicefrac{3c}{2}}}{\mu}\frac{\beta\eta}{\alpha} 
			\le \frac{\varepsilon}{8}
			\Rightarrow \beta = \frac{c}{K} \le \frac{\varepsilon\mu\alpha}{64L\eta e^{\nicefrac{3c}{2}}} 
			\aleq{uses \eqref{eq:choice_alpha_high_prob_no_restarts_inexp}} \min\left\{\frac{\varepsilon\mu}{64L\eta e^{\nicefrac{3c}{2}}}, \frac{(\varepsilon\mu)^3}{64L\eta e^{\nicefrac{9c}{2}}(32\rho\sigma)^2}\right\},
		\end{align*}
		where \annotate. The above can be satisfied if we choose $K$ such that 
		\begin{align}
			K &\ge \max\left\{\frac{64L\eta c e^{\nicefrac{3c}{2}}}{\varepsilon\mu}, \frac{64L\eta ce^{\nicefrac{9c}{2}}(32\rho\sigma)^2}{(\varepsilon\mu)^3}\right\},\notag\\
			&\aeq{uses the value of $\eta$} 
				\max\left\{\frac{128L e^{3c}}{\varepsilon\mu^2}, \frac{128Le^{6c}(32\rho\sigma)^2}{\mu(\varepsilon\mu)^3}\right\}\cdot \log\left(\frac{2(f(x_0)-f^{\star})}{\varepsilon}\right)
			\label{eq:choice_beta_1_inexp_no_restarts_inexp},
		\end{align}
		where \annotate. This choice of $K$ is satisfied by the choice in \eqref{eq:choice_parameters_2_no_restarts_inexp}. Moving on, we want 
		\begin{equation}\label{eq:choice_beta_2_inexp_desire_no_restarts}
			\frac{4L\beta\eta e^{\nicefrac{3c}{2}}}{\mu} \le \frac{\varepsilon}{8} \Rightarrow \beta = \frac{c}{K} \le \frac{\varepsilon\mu}{32L\eta e^{\nicefrac{3c}{2}}}.
		\end{equation}
		We can satisfy the right inequality above if we choose $K$ such that
		\begin{equation}\label{eq:choice_beta_2_inexp_no_restarts_inexp}
			K \ge \frac{32L\eta ce^{\nicefrac{3c}{2}}}{\varepsilon\mu} 
			\aeq{uses the value of $\eta$ in \eqref{eq:choice_parameters_1_no_restarts_inexp}} 
				\frac{64L e^{3c}}{\mu^2\varepsilon}\log\left(\frac{2(f(x_0)-f^{\star})}{\varepsilon}\right),
		\end{equation}
		where \annotate. This choice of $K$ is satisfied by the choice in \eqref{eq:choice_parameters_2_no_restarts_inexp}. Finally, we want 
		\begin{align*}
			2\rho\sigma\frac{\beta\eta}{\alpha} \le \frac{\varepsilon}{8} \Rightarrow \beta = \frac{c}{K} &\le 
			\frac{\varepsilon\alpha}{16\rho\sigma\eta} 
			\aleq{uses \eqref{eq:choice_alpha_high_prob_no_restarts_inexp}} 
			\min\left\{
			\frac{\varepsilon}{16\rho\sigma\eta}, 
			\frac{\varepsilon(\varepsilon\mu)^2}{16\rho\sigma\eta(32\rho\sigma)^2e^{3c}}
			\right\},
		\end{align*}
		where \annotate. The above inequality is satisfied with the choice of $K$ such that 
		\begin{align}\label{eq:choice_K_3_inexp_no_restarts_inexp}
			K &\ge \max\left\{
			\frac{16\rho\sigma\eta c}{\varepsilon}, 
			\frac{16\rho\sigma\eta c(32\rho\sigma)^2e^{3c}}{\varepsilon(\varepsilon\mu)^2}
			\right\}\\
			&\aeq{uses the value of $\eta$ in \eqref{eq:choice_parameters_1_no_restarts_inexp}} 
				\max\left\{
				\frac{32\rho\sigma e^{\nicefrac{3c}{2}}}{\varepsilon\mu}, 
				\frac{(32\rho\sigma e^{\nicefrac{3c}{2}})^3}{(\varepsilon\mu)^3}
				\right\}\log\left(\frac{2(f(x_0)-f^{\star})}{\varepsilon}\right)
			,\notag
		\end{align}
		where \annotate. This bound on $K$ is satisfied by the choice in \eqref{eq:choice_parameters_1_no_restarts_inexp}. A combination of \eqref{eq:choice_beta_1_inexp_no_restarts_inexp}, \eqref{eq:choice_beta_2_inexp_no_restarts_inexp}, \eqref{eq:choice_K_3_inexp_no_restarts_inexp} gives the choice of $K$ in \eqref{eq:choice_parameters_2_no_restarts_inexp}:
		\begin{align}\label{eq:choice_beta_final_high_prob_no_restarts_inexp}
			K &= 
				\max\left\{\frac{128Le^{3c}}{\varepsilon\mu^2},
				\frac{32\rho\sigma e^{\nicefrac{3c}{2}}}{\varepsilon\mu},
				\frac{128Le^{6c}(32\rho\sigma)^2}{\mu(\varepsilon\mu)^3},
				\frac{(32\rho\sigma e^{\nicefrac{3c}{2}})^3}{(\varepsilon\mu)^3}\right\}\log\left(\frac{2(f(x_0)-f^{\star})}{\varepsilon}\right) 
			.
		\end{align}
		Now we show that the choice of $K$, $\beta$, and $\eta$ ensures that the first term in \eqref{eq:one_stage_progress_no_restarts_inexp} is smaller than $\varepsilon/2$. Let us show that  
		\begin{equation}\label{eq:choice_K_desire_no_restarts_inexp}
			\left(1-\frac{\mu\beta\eta}{2e^{\nicefrac{3c}{2}}}\right)^{K}(f(x_0) - f^{\star}) \le e^{-\mu\beta\eta e^{-\nicefrac{3c}{2}}K/2}(f(x_0)-f^{\star}) \le \frac{\varepsilon}{2}.
		\end{equation}
		The last inequality is satisfied if the following condition holds: 
		\begin{equation*}
			\frac{\mu\beta\eta}{2e^{\nicefrac{3c}{2}}}K \ge \log\left(\frac{2(f(x_0)-f^{\star})}{\varepsilon}\right).
		\end{equation*}
		Plugging in the choice of $\beta = \frac{c}{K}$ and $\eta = \frac{2e^{\nicefrac{3c}{2}}}{\mu c}\log\left(\frac{2(f(x_0)-f^{\star})}{\varepsilon}\right)$, we obtain
		\begin{equation*}
			\frac{\mu\beta\eta}{2e^{\nicefrac{3c}{2}}}K = \frac{\mu}{2e^{\nicefrac{3c}{2}}}\cdot \frac{c}{K} \cdot \frac{2e^{\nicefrac{3c}{2}}}{\mu c}\log\left(\frac{2(f(x_0)-f^{\star})}{\varepsilon}\right) \cdot K = \log\left(\frac{2(f(x_0)-f^{\star})}{\varepsilon}\right).
		\end{equation*}
		Grouping the bounds together, we obtain that the choice of $K,\eta,\beta$ implies that 
		$$
		\EE[f(x_{K}) - f^{\star}] \le \frac{\varepsilon}{2} + 4\cdot\frac{\varepsilon}{8} = \varepsilon.
		$$
	\end{proof}

	\begin{corollary}[Full statement of \Cref{cor:best_eps}]\label{cor:best_eps_appendix}
		Under the setup of \Cref{thm:str_decay_mu_kl_expectation_no_restarts_full_statement}, let the token budget be large enough: $T \ge \max\left\{2cBS, \frac{BS}{2}\log\left(\frac{2(f(x_0)-f^*}{\varepsilon}\right)\right\}$. Then, running the algorithm with parameters from \Cref{thm:str_decay_mu_kl_expectation_no_restarts_full_statement} for $K=\nicefrac{T}{BS}$ iterations, we achieve the optimization error
		\begin{equation}\label{eq:best_eps_full_statement}
			\varepsilon = \max\left\{
			\frac{128LBS e^{3c}}{\mu^2T},  
			\left(\frac{128Le^{6c}(32\rho\sigma_\star)^2}{\mu^4T}\right)^{ 1/3}, 
			\frac{32e^{\nicefrac{3c}{2}}\rho\sigma_\star}{\mu(T^2BS)^{1/6}}\right\}.
		\end{equation}
	\end{corollary}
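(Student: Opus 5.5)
The corollary inverts the iteration requirement of \Cref{thm:str_decay_mu_kl_expectation_no_restarts_full_statement}. Fix $K = T/(BS)$. The two extra hypotheses on $T$ give $K \ge 2c$ and $K \ge \tfrac12\log\bigl(2(f(x_0)-f^\star)/\varepsilon\bigr)$. These are exactly the first two entries of the outer maximum in the choice of $K$, and they are what \Cref{lem:bound_x_mu_kl_no_restarts} and the contraction-factor argument need. So it only remains to show that the four remaining lower bounds on $K$ hold when $\varepsilon$ is the value in \eqref{eq:best_eps_full_statement}. The logarithmic factor $\log(2(f(x_0)-f^\star)/\varepsilon)$ is suppressed, in line with the $\tilde{\cO}$ in the main-text version. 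It could be kept by writing $\varepsilon$ implicitly or by absorbing the log into $T$.

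The main step substitutes $\sigma^2 = \sigma_\star^2/(BS)$ and $K = T/(BS)$ into each requirement and solves it for $\varepsilon$.

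\emph{First term.} The bound $K \ge 128Le^{3c}/(\varepsilon\mu^2)$ becomes $\varepsilon \ge 128 L e^{3c} BS/(\mu^2 T)$.

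\emph{Third term.} The bound $K \ge 128Le^{6c}(32\rho\sigma)^2/(\mu(\varepsilon\mu)^3)$ becomes
\[
\varepsilon^3 \ge \frac{128Le^{6c}(32\rho)^2\sigma_\star^2}{BS}\cdot\frac{BS}{\mu^4 T}.
\]
The factors of $BS$ cancel, leaving $\varepsilon \ge \bigl(128Le^{6c}(32\rho\sigma_\star)^2/(\mu^4T)\bigr)^{1/3}$.

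\emph{Fourth term.} The bound $K \ge (32\rho\sigma e^{3c/2})^3/(\varepsilon\mu)^3$ becomes
\[
\varepsilon^3 \ge \frac{(32\rho\sigma_\star e^{3c/2})^3}{(BS)^{3/2}\mu^3}\cdot\frac{BS}{T},
\]
which gives $\varepsilon \ge 32e^{3c/2}\rho\sigma_\star/\bigl(\mu (BS)^{1/6}T^{1/3}\bigr) = 32e^{3c/2}\rho\sigma_\star/\bigl(\mu(T^2BS)^{1/6}\bigr)$.

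\emph{Second term.} The bound $K \ge 32\rho\sigma e^{3c/2}/(\varepsilon\mu)$ becomes $\varepsilon \ge 32e^{3c/2}\rho\sigma_\star (BS)^{1/2}/(\mu T)$. This term does not appear in \eqref{eq:best_eps_full_statement}, so I must show it is dominated by the others. Write $a = 32e^{3c/2}\rho\sigma_\star/\mu$. The second term is then $a\sqrt{BS}/T$ and the fourth is $a/(T^{1/3}(BS)^{1/6})$. Their ratio is $(BS/T)^{2/3} = K^{-2/3} \le 1$, so the second term is at most the fourth and can be dropped.

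Taking $\varepsilon$ equal to the maximum of the three surviving quantities satisfies every requirement simultaneously. The conclusion $\EE[f(x_K)-f^\star]\le\varepsilon$ then follows directly from \Cref{thm:str_decay_mu_kl_expectation_no_restarts_full_statement}. The step that needs the most care is this domination argument for the second term. The inverted algebra is routine, but I will also check that the $\alpha$ prescribed by the theorem is well defined with $\sigma$ replaced by $\sigma_\star/\sqrt{BS}$, which it is.
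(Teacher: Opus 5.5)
Your proposal is correct and follows essentially the same route as the paper. Both substitute $K=T/(BS)$ and $\sigma^2=\sigma_\star^2/(BS)$ into the four lower bounds on $K$ from the theorem, solve each for $\varepsilon$, and drop the $\sqrt{BS}/T$ term because its ratio to the fourth term is $(BS/T)^{2/3}\le 1$. Like the paper's final statement, you discard the factor $\log\bigl(2(f(x_0)-f^\star)/\varepsilon\bigr)$, so the displayed equality holds only up to that logarithmic factor, which you correctly flag.
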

	
	\begin{proof}
		From \Cref{thm:str_decay_mu_kl_expectation_no_restarts_full_statement}, we have that to achieve the optimization error $\varepsilon$, we need to use $K$ iterations defined as 
		$$K =  
		\max\left[\frac{128Le^{3c}}{\varepsilon\mu^2}, \frac{32e^{\nicefrac{3c}{2}}\rho\sigma}{\varepsilon\mu}, 
		\frac{128Le^{6c}(32\rho\sigma)^2}{\mu(\varepsilon\mu)^3}, 
		\frac{(32\rho\sigma e^{\nicefrac{3c}{2}})^3}{(\varepsilon\mu)^3}\right]\log\left(\frac{2(f(x_0)-f^{\star})}{\varepsilon}\right),$$
		ignoring the requirements $K\ge 2c$, $K \ge \frac{1}{2}\log\left(\frac{2(f(x_0)-f^*)}{\varepsilon}\right)$, which holds in practice (and also follows from the assumption on $T$). Multiplying both sides of this expression by $BS$, using \Cref{asmp:bounded_variance} that says that $\sigma^2 = \frac{\sigma_\star^2}{BS}$, and using the relation $T=KBS$, we obtain
		$$T = \max\left\{ 
		\frac{128Le^{3c}LBS}{\varepsilon\mu^2}, \frac{32e^{\nicefrac{3c}{2}}\rho\sigma_\star\sqrt{BS}}{\varepsilon\mu}, \frac{128Le^{6c}(32\rho\sigma_\star)^2}{\mu(\varepsilon\mu)^3}, \frac{(32\rho\sigma_\star e^{\nicefrac{3c}{2}})^3}{(\varepsilon\mu)^3\sqrt{BS}}\right\}\log\left(\frac{2(f(x_0)-f^{\star})}{\varepsilon}\right).$$
		Since the token budget $T$ is fixed in the experiments, the expression above says that we cannot achieve an arbitrary optimization error $\varepsilon$:
		\begin{align*}\varepsilon &= \max\left\{ 
			\frac{128e^{3c}LBS}{T\mu^2}, 
			\frac{32e^{\nicefrac{3c}{2}}\rho\sigma_\star\sqrt{BS}}{T\mu}, \left(\frac{128Le^{6c}(32\rho\sigma_\star)^2}{\mu^4T}\right)^{1/3}, \right.\\
			&\hspace{9cm}\left.\left(\frac{(32\rho\sigma_\star e^{\nicefrac{3c}{2}})^3}{T\mu^3\sqrt{BS}}\right)^{1/3}\right\}\log\left(\frac{2(f(x_0)-f^{\star})}{\varepsilon}\right).
		\end{align*}
		Now we compare the second and fourth terms in the expression above. We note that the second term is larger \emph{iff}
		\begin{equation}
			\frac{32e^{\nicefrac{3c}{2}}\rho\sigma_\star\sqrt{BS}}{T\mu} \ge \frac{32\rho\sigma_\star e^{\nicefrac{3c}{2}}}{T^{1/3}\mu(BS)^{1/6}} \Longleftrightarrow (BS)^{2/3} \ge T^{2/3} \Longleftrightarrow BS \ge T.
		\end{equation}
		In other words, the second term is smaller than or equal to the fourth term. Therefore, it can be ignored in the maximum. This finalizes the proof.
	\end{proof}
	
	\begin{algorithm}[tb]
		\caption{Unconstrained Stochastic Conditional Gradient (uSCG)}
		\label{alg:spectral_gd_decay_fw_unconstrained}
		\begin{algorithmic}
			\STATE {\bfseries Input:} $x_0,m_0 \in \cX$, parameters $\alpha, \eta > 0$ 
			\FOR{$k=0, \ldots, K-1$}
			\STATE sample $\xi_k\sim\cD$
			\STATE compute $m_{k+1} = (1-\alpha)m_k + \alpha g(x_k;\xi_k)$
			\STATE compute $d_{k+1} = {\rm arg}\min_{d\in\cX}\<m_{k+1},d>$ s.t. $\|d\|\le 1$ 
			\STATE compute $x_{k+1} = x_k + \eta d_{k+1}$
			\ENDFOR
		\end{algorithmic}
	\end{algorithm}

	\begin{remark}
		Our work is based on the convergence guarantees under the $\mu$-KL condition following prior work \citep{schaipp2025surprising, islamov2024loss, tran2024reevaluating, guille2023no} that provides evidence that the loss landscape of neural networks exhibits a convex-like structure. However, it is possible to extend the results to a standard non-convex setting under the smoothness assumption only. In such a case, one can consider Unconstrained SCG (Algorithm~\ref{alg:spectral_gd_decay_fw_unconstrained}) and the convergence metric changes from the function sub-optimality to a dual gradient norm, i.e., $\min_{k=0,1,\ldots,K-1}\EE [\|\nabla f(x_k)\|_*]$ or $\EE[\|\nabla f(\overline{x}_k)\|_*]$ with $\overline{x}_k$ being selected uniformly at random from $\{x_0,x_1,\ldots,x_{K-1}\}$; see \citep[Theorem 5.5]{pethick2025scion} and \citep[Corollary 2]{kovalev2025orthogonalization}.
		
		Under the setup of \citep[Corollary 2]{kovalev2025orthogonalization} and a fixed token budget $T$, we achieve the optimization error
		\begin{eqnarray}
			\varepsilon &=& \cO\left(\max\left\{
			\frac{\sqrt{L\Delta BS}}{\sqrt{T}},
			\frac{(L\Delta)^{1/4}\sqrt{\rho\sigma_{\star}}}{T^{1/4}}, 
			\frac{\rho\sigma_{\star}\sqrt{BS}}{T},
			\frac{\rho\sigma_{\star}}{T^{1/3}(BS)^{1/6}}
			\right\}\right)\notag\\
			&=& \cO\left(\max\left\{
			\frac{\sqrt{L\Delta BS}}{\sqrt{T}},
			\frac{(L\Delta)^{1/4}\sqrt{\rho\sigma_{\star}}}{T^{1/4}},
			\frac{\rho\sigma_{\star}}{T^{1/3}(BS)^{1/6}}
			\right\}\right), \label{eq:varepsilon_nonconvex}
		\end{eqnarray}
		where we used $\Delta = f(x_0) - f^\star$ and $T \ge BS$. 
		We observe that the 
		third term in \eqref{eq:varepsilon_nonconvex} is identical to the third term in \eqref{eq:best_eps} (up to constant $\mu$, which is expected due to the change of convergence metric), while the first two are different. Besides, the middle term is also batch size and sequence length independent, but has a power $T^{1/4}$ instead of $T^{1/3}$ as in \eqref{eq:best_eps}. Following the approach of \Cref{sec:strategies}, i.e., choosing $B$ and $S$ in the intersection of the first two terms in \eqref{eq:how_to_scale_batch_size_nonconvex}, we derive the scaling rules similar to \eqref{eq:how_to_scale_batch_size}
		\begin{equation}\label{eq:how_to_scale_batch_size_nonconvex}
			B_1S_1 = B_0S_0 \sqrt{\frac{D_1}{D_0}\frac{\rho_1^2}{\rho_0^2} \frac{L_0}{L_1}},
		\end{equation}
		assuming that parameters $\Delta$ and $\sigma_\star$ are independent of the model size. This approach is similar to \eqref{eq:how_to_schedule_batch_size_LT} up to problem-dependent constants $\rho$ and $L$. 
	\end{remark}

\end{document}